\documentclass{article} 
\usepackage{iclr2027_conference,times}

\usepackage{amsmath,amsfonts,bm}

\def\eqref#1{equation~\ref{#1}}

\def\1{\bm{1}}

\DeclareMathAlphabet{\mathsfit}{\encodingdefault}{\sfdefault}{m}{sl}
\SetMathAlphabet{\mathsfit}{bold}{\encodingdefault}{\sfdefault}{bx}{n}

\usepackage{comment}
\usepackage[normalem]{ulem}
\usepackage{hyperref}
\usepackage{url}
\usepackage{amsthm}
\usepackage{booktabs}
\usepackage{multirow}
\usepackage{graphicx}
\newtheorem{definition}{Definition}
\newtheorem{theorem}{Theorem}

 \usepackage{enumitem}
 \usepackage{wrapfig}
 \usepackage{tikz}
\usetikzlibrary{positioning,arrows.meta}
\usepackage{adjustbox}
 \usetikzlibrary{shadows}
\usepackage{algorithm}
\usepackage{algpseudocode}
\title{WorldGraph: Graph-Native World Modeling}
\iclrfinalcopy
\author{
	\textbf{Zezhong Ding$^{1, 3, \dag}$, \quad Yipeng Li$^{2, 3, \dag}$, \quad Xike Xie$^{1,2, 3}$\thanks{Corresponding Author \quad  $^\dag$Equal Contribution}}\vspace{3pt} \\
     {\small $^1$School of Artificial Intelligence and Data Science, University of Science and Technology of China (USTC)} \\
	 {\small $^2$School of Biomedical Engineering, USTC} \\
	{\small $^3$Data Darkness Lab, Suzhou Institute for Advanced Research, USTC} \\ 
	\texttt{\small \{zezhongding,liyipeng0131\}@mail.ustc.edu.cn, xkxie@ustc.edu.cn} 
}

\begin{document}

\maketitle
\pagestyle{plain}
\thispagestyle{plain}
\begin{abstract}
World models infer latent states of an environment to capture its underlying dynamics and predict future evolution. 
Many real-world environments, however, are inherently relational and observed as evolving graphs, where entities, relations, and their properties change over time.
Prior graph-related world models use graph structures to organize internal states or support task-specific reasoning, rather than treating an evolving graph itself as the modeled world.
We instead study \emph{graph world modeling} (GWM), where graph evolution itself constitutes the world dynamics. 
We formulate graph world modeling over observed graph evolution, latent graph states, and heterogeneous graph-transition predictions.
Based on this formulation, we construct \textit{GWM-Zero}, a benchmark covering node-, edge-, and graph-level transitions over eight temporal graph datasets.
We propose \textit{WorldGraph}, which combines a state-aware graph transformer for multi-granularity structural and transition-conditioned evolution modeling with transition-aware GRPO using dynamic grouping and structure-aware verifiable rewards.
Extensive experiments on GWM-Zero show that WorldGraph consistently outperforms representative graph representation, temporal graph learning, graph pretraining, and graph world-model baselines across all three transition granularities.
\end{abstract}

\section{Introduction}

World models~\citep{worldmodels,DingZSZZFYSLSXL26} learn compact internal states that capture how an environment evolves to support prediction of its future. 
Many real-world environments, however, are inherently \emph{relational}: entities interact through structured relations, and both the entities and relations may change over time. 
Graphs provide a natural representation of such relational environments.
More importantly, when the environment itself is observed as an evolving graph~\citep{KurenkovLAJ0Z0023,SavinovDK18}, its evolution is the world dynamics to be modeled.
This motivates \emph{graph world models} (GWMs) that model graph states and their evolution.

At time step $t$, the world provides an observed graph $g_t$,
while its transition to $g_{t+1}$ may involve updates to nodes, edges, properties, or their combinations. We denote the {graph change} associated with this transition by $\Delta g_t$, and view an evolving graph at a high level as $\{(g_t,\Delta g_t)\}_{t}$. Beyond the observed graphs, a graph world model should infer a latent world state that captures the structural and dynamic information underlying such evolution. 
This raises a fundamental question:
\begin{center}
    {\it How should a world model represent the state underlying graph evolution and predict its heterogeneous transitions?}
\end{center}

Earlier studies have introduced graphs into world modeling mainly as auxiliary structures.
L$^3$P~\citep{Zhang0S21a} abstracts Markov decision process states and their reachability into a graph for long-horizon planning. 
C-SWM~\citep{KipfPW20} represents objects extracted from visual observations as a latent graph and uses a GNN to model their interactions.
Feng et al.~\citep{FengWLY25} use graphs to organize multi-modal structure and reasoning.
More recently, Graph-JEPA~\citep{skenderi2023graphjepa} improves the world model’s structured representation via masked subgraph latent prediction, and Liu et al.~\citep{liu2026graphworldmodelsconcepts} conceptualized these graph-enhanced world models based on relational inductive biases. 
{\it Our focus is different: we consider {\it graph-native worlds}, where the observed environment is itself an evolving graph and graph evolution is the prediction target.}
This calls for a graph-native formulation that explicitly models the latent world state underlying the graph evolution and predicts heterogeneous graph transitions across tasks of node, edge, and graph levels. 
Unlike dynamic graph learning, which typically learns temporal representations for predefined downstream tasks, a GWM maintains a latent world state that summarizes graph evolution and uses it to model heterogeneous future transitions.
Node-, edge-, and graph-level predictions can then be viewed as different observable projections of the same underlying graph-world dynamics.
This raises two fundamental challenges for a general GWM: representing the latent state underlying graph evolution and modeling the heterogeneous transitions that drive it.

\textbf{Challenge 1: State Modeling.}
A graph world state should capture not only the current graph structure but also the evolution leading to it.
Different transitions may depend on different structural scales and historical contexts.
For example, relation formation may require broader structural context, while local property changes may depend mainly on nearby neighborhoods and recent transitions. 
Existing message-passing-based GNNs~{\citep{kipf2017semi,velickovic2018graph,hamilton2017inductive}} mainly capture local structure, while graph transformers and GFMs~{\citep{sgformer,wu2022nodeformer,RampasekGDLWB22,wang2025mdgfm}} provide broader structural context but focus on static graph representations. Dynamic graph learning methods~{\citep{rossi2020temporal,peng2025tidformer}} capture temporal information, but do not explicitly model a world state from graph structure and preceding transitions. 
The first challenge is therefore to learn a world state that jointly captures multi-granularity graph structure and historical dynamics.

\begin{wrapfigure}{r}{0.38\textwidth}
\centering
\vspace{-15pt}
\begin{tikzpicture}[
    font=\scriptsize,
    >=Stealth,
    node distance=1.05cm,
    box/.style={
        draw,
        rounded corners=2.5mm,
        thick,
        minimum height=11mm,
        align=center,
        inner sep=2.5mm,
        drop shadow={shadow xshift=0.4mm, shadow yshift=-0.4mm, opacity=0.12}
    },
    boxblue/.style={
        box,
        draw=blue!50!black,
        text=blue!70!black,
        fill=blue!5,
        top color=blue!8,
        bottom color=blue!2
    },
    boxgreen/.style={
        box,
        draw=green!50!black,
        text=green!50!black,
        fill=green!5,
        top color=green!8,
        bottom color=green!2
    },
    titlebar/.style={
        rounded corners=3mm,
        minimum width=20mm,
        minimum height=7mm,
        font=\bfseries,
        align=center
    },
    arr/.style={-Latex, thick, draw=black!55},
    arrside/.style={Circle-Latex, thick, draw=black!45}
]

\node[fill=blue!3, rounded corners=5mm, minimum width=15mm, minimum height=70mm,
      opacity=0.6] at (0, 2.7) {};

\node[fill=green!3, rounded corners=5mm, minimum width=15mm, minimum height=70mm,
      opacity=0.6] at (2.7, 2.7) {};

\node[titlebar, text=blue!60!black] at (0, 7) {\scriptsize World Model};

\node[boxblue, text width=15mm] (wm-obs) at (0, 6) {Observation\\(e.g., image)};
\node[boxblue, text width=15mm, draw=blue!70!black, line width=1.1pt] (wm-state) at (0, 4.5) {State Model};
\node[boxblue, text width=15mm] (wm-latent) at (0, 3.0) {Latent State};
\node[boxblue, text width=15mm, draw=blue!70!black, line width=1.1pt] (wm-trans) at (0, 1.5) {Transition Model};
\node[boxblue, text width=15mm] (wm-next) at (0, 0.0) {Next \\Transition};

\draw[arr] (wm-obs)    -- (wm-state);
\draw[arr] (wm-state)  -- (wm-latent);
\draw[arr] (wm-latent)  -- (wm-trans);
\draw[arrside] (wm-obs.east) -- ++(0.3,0) |- (wm-trans.east);
\draw[arr] (wm-trans)   -- (wm-next);

\draw[draw=black!18, dashed, thick] (1.5, 6.2) -- (1.5, -0.6);

\begin{scope}[xshift=2.7cm]
    \node[titlebar, text=green!50!black] at (0, 7) {\scriptsize GWM};

    \node[boxgreen, text width=15mm] (gw-observed) at (0, 6) {Graph Observation\\$g_t, \Delta g_{t-1}$};
    \node[boxgreen, text width=15mm, draw=green!70!black, line width=1.1pt] (gw-state) at (0, 4.5) {State Model $f_s$};
    \node[boxgreen, text width=15mm] (gw-latent) at (0, 3.0) {Latent State $\mathbf{s}_t$};
    \node[boxgreen, text width=15mm, draw=green!70!black, line width=1.1pt] (gw-trans) at (0, 1.5) {Transition Model $f_\Delta$};
    \node[boxgreen, text width=15mm] (gw-next) at (0, 0.0) {Next Graph\\Change $\Delta g_t$};

    \draw[arr] (gw-observed)    -- (gw-state);
     \draw[arr] (gw-state)      -- (gw-latent);
    \draw[arr] (gw-latent)      -- (gw-trans);
    \draw[arrside] (gw-observed.east) -- ++(0.3,0) |- (gw-trans.east);
    \draw[arr] (gw-trans)       -- (gw-next);
\end{scope}

\end{tikzpicture}
\label{fig:intro}
\vspace{-15pt}
\caption{\small \bf World Model \citep{worldmodels} vs. Our Proposed GWM}
\vspace{-18pt}
\label{fig:world_model_comparison}
\end{wrapfigure}

\textbf{Challenge 2: Transition Modeling.}
Graph transitions are inherently heterogeneous: 
they may modify node properties, relations, or local structures, while different transition patterns can occur at highly imbalanced frequencies~\citep{liu2023mata}. 
Moreover, their significance is not determined by frequency alone: a rare change involving structurally important entities may have a substantial effect on subsequent graph evolution.
A general GWM should thus avoid being dominated by frequent transitions while remaining sensitive to structurally consequential changes.
The second challenge is to learn heterogeneous graph transitions while accounting for both transition frequency and structural relevance.

To address these challenges, we propose \textit{WorldGraph}, which jointly models latent world states and heterogeneous transitions. 
For {\bf Challenge 1}, we develop a \textit{state-aware graph transformer} that constructs the world state by integrating multi-granularity structural encoding and history-aware state encoding.
It combines hop- and path-level graph contexts to capture structural dependencies at different scales, while incorporating preceding graph transitions to model how the current graph state has evolved over time. 
For {\bf Challenge 2}, we develop a \textit{transition-aware GRPO} with \emph{dynamic grouping} and
\emph{structure-aware verifiable rewards}. 
Dynamic grouping allocates more training signal to rare transitions, while structure-aware rewards emphasize changes involving structurally important and historically variable entities. 
Together, these techniques enable WorldGraph to model graph dynamics across node-, edge-, and graph-level transitions within a unified framework.

Our contributions are threefold.
    \textbf{1)} We formulate graph-native
    world modeling through observed graph states, transitions, and latent world states, and construct {\it GWM-Zero}, a benchmark covering node-, edge-, and graph-level transition prediction tasks over \textbf{8} graph datasets.
    \textbf{2)} We propose \textit{WorldGraph}, a unified framework that instantiates this formulation with a
    state-aware graph transformer and transition-aware RL.
    \textbf{3)} {Extensive experiments demonstrate that WorldGraph achieves an average model quality improvement of \textbf{10.77\%} over the strongest baselines, with the largest improvement of \textbf{50.66\%} in F1 score on the TGBN-Trade~\citep{huang2023temporal} dataset.}

\vspace{-10pt}
\section{Problem Formulation}
\label{sec:problem}

We formalize graph world modeling through observed graph states, transitions, and world states.
\paragraph{Observed Graph Evolution.}
At time step $t$, the observed {\it graph state} is represented as
$g_t=(V_t,E_t,X_t)\in\mathcal{G}$,
where $V_t$, $E_t$, and $X_t$ denote the entities, relations, and associated properties, respectively.
The evolution from $g_t$ to $g_{t+1}$ forms a {\it graph transition}.
We use $\Delta g_t$ to denote the {\it graph change} associated with this transition, which may involve 
node or edge addition/deletion, property changes, or their combinations.
Accordingly, an evolving graph world can be viewed as a sequence of graph states and their changes, {$\{(g_t,\Delta g_t)\}_{t=1}^{T}$}.

\paragraph{Latent World States.}
While $g_t$ describes the graph observed at time $t$, the underlying dynamics of graph evolution are not directly observed.
We therefore introduce a world state $\mathbf{s}_t\in\mathcal{S}$ 
that summarizes both the current graph structure and the preceding evolution:
\begin{equation}
\label{eq:state_model}
\mathbf{s}_t=f_s(g_t,\Delta g_{t-1},\mathbf{s}_{t-1}),
\end{equation}
where $f_s$ denotes the {\it state model}.

\begin{definition}[\textbf{Graph World Model}]
\label{def:gwmreal}
A graph world model (GWM) models graph-world dynamics through a state model $f_s$ and a transition model $f_{\Delta}$.
Given the current graph observation $g_t$, the preceding graph change $\Delta g_{t-1}$, and the previous world state $\mathbf{s}_{t-1}$, the state model infers $\mathbf{s}_t=f_s(g_t,\Delta g_{t-1},\mathbf{s}_{t-1})$, while the transition model 
predicts the next graph change as
$\Delta g_t = f_\Delta(g_t,\Delta g_{t-1},\mathbf{s}_{t})$.
\end{definition}

The two components capture complementary aspects of graph-world dynamics.
The state model captures the latent structural and dynamic information of the evolving graph world, while the transition model predicts its subsequent evolution.

A complete graph change $\Delta g_t$ may contain multiple structural events and may be observed at different granularities.
We therefore instantiate graph-transition prediction at three levels: node-level changes in entities and properties, edge-level changes in relations, and graph-level changes in local structures.
Importantly, these tasks are different observable projections of the same graph-world transition $\Delta g_t$ rather than separate forms of the world dynamics.
Accordingly, graph world modeling consists of two coupled problems: \emph{state modeling} for learning the latent world state $\mathbf{s}_t$, and \emph{transition modeling}, which predicts observable graph transitions from this state.

\vspace{-10pt}
\section{WorldGraph}
\vspace{-10pt}
\label{sec:worldgraph}

WorldGraph instantiates the two components of a GWM introduced in Section~\ref{sec:problem}.
For state modeling, it constructs $s_t$ by combining multi-granularity structure in the current graph with transition-conditioned history.
For transition modeling, it learns heterogeneous graph transitions with frequency-aware sampling and structure-aware rewards.
Figure~\ref{fig:framework} gives an overview of the framework.

\begin{figure*}[t]
    \centering
\includegraphics[width=0.9\textwidth]{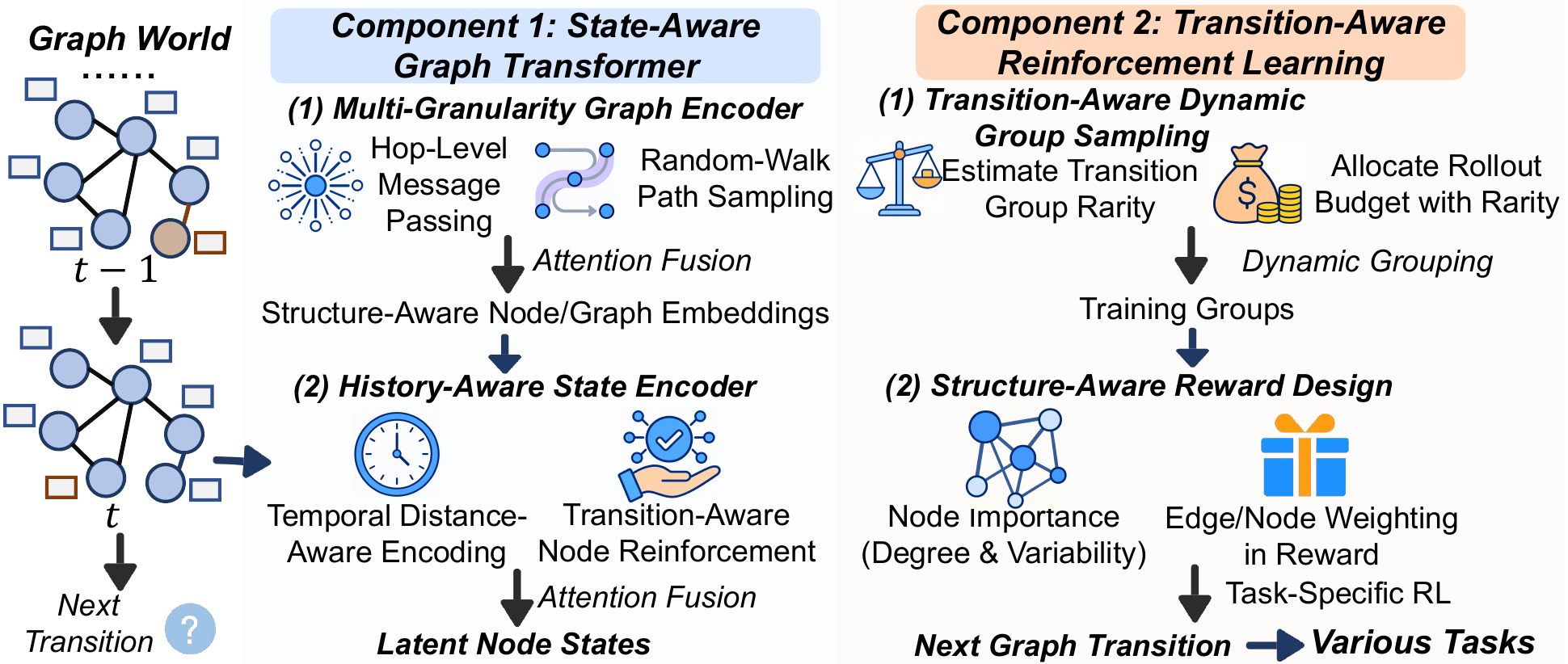}
    \vspace{-12pt}
    \caption{\small \textbf{Overview of WorldGraph}. {
    The \textit{state-aware graph transformer} constructs latent world states by integrating multi-granularity graph structure with transition-conditioned history. 
The \textit{transition-aware RL algorithm} learns heterogeneous graph transitions through dynamic grouping and structure-aware verifiable rewards across node-, edge-, and graph-level tasks.
    }}
    \label{fig:framework}
    \vspace{-10pt}
\end{figure*}
\vspace{-5pt}
\subsection{State-aware Graph Transformer}
\vspace{-5pt}
The latent world state $s_t$ should summarize both what the current graph looks like and how it arrived there.
WorldGraph constructs this state in two stages: a multi-granularity graph encoder captures the current structure, and a history-aware state encoder integrates the preceding evolution.
\subsubsection{Stage 1: Multi-granularity Graph Encoder}
\vspace{-5pt}
Different graph transitions depend on structural information at different scales\footnote{For example, in the UN Vote dataset~\citep{poursafaei2022towards}, two countries without a direct co-voting edge may be linked by a multi-hop chain of shared co-voting partners, so predicting a new edge requires long-range context. In the TGBN-Genre dataset~\citep{huang2023temporal}, predicting whether a user’s music preferences change mainly depends on that user’s recent preference history, so this local information is sufficient.}.
We combine hop-based message passing and random-walk path sampling to obtain complementary structural views.

\textbf{Hop-Level Message Passing.}
Given a maximum receptive field constraint (i.e., the maximum number of hops $L_{\max}$) {and the graph state $g_t =(V_t,E_t,X_t)$}, 
for every node $v \in V_t$,
{we obtain the hop-level node embeddings $\{\mathbf{x}_v^{(\ell)}\}_{1 \leq \ell \leq L_{\max}}$, where $\mathbf{x}_v^{(\ell)}$ aggregates the node features from all neighbors within the $\ell$-hop range of node $v$ by iteratively applying message passing~\citep{hamilton2017inductive}.}\\
\textbf{Random-Walk Path Sampling.}
To further enhance the richness of structural representations, we also perform random-walk sampling to collect $M$ paths starting from every node $v \in V_t$.
{Then, for each node $v\in V_t$, we obtain the path-level embeddings $\{\mathbf{x}_v^{\mathrm{path}_i}\}_{i=1}^{M}$, where $\mathbf{x}_v^{\mathrm{path}_i}$  aggregates all node features along $\mathrm{path}_i$ by applying an attention mechanism~\citep{vaswani2017attention}.}\\
\textbf{Multi-Granularity Attention Fusion.}
 {For each node $v\in V_t$ at time $t$, after obtaining $\{\mathbf{x}_v^{(\ell)}\}_{\ell=1}^{L_{\max}}$ and $\{\mathbf{x}_v^{\mathrm{path}_i}\}_{i=1}^{M}$, we fuse these embeddings via an attention-based aggregation\footnote{Here, we fuse the hop-level embeddings and the path-level embeddings, while also incorporating global context from a linear-attention-aware subgraph~\citep{sgformer}.} to obtain a structure-aware node embedding $\mathbf{z}_{v,t}$. 
 For $g_t$, we further obtain the structure-aware graph embedding $\mathbf{z}_t$ by applying mean pooling over the node embeddings $\{\mathbf{z}_{v,t}\}_{v \in V_t}$}.

\subsubsection{Stage 2: History-aware State Encoder}
\vspace{-5pt}
The current graph structure alone does not reveal how the graph arrived at its present state.
We therefore integrate preceding graph changes into the latent world state, while accounting for both their temporal distance and their relevance to recent evolution.

\textbf{Temporal Distance-Aware Encoding.}
Because the states that are closer in time have a greater impact on the current transition~\citep{wang2025understanding}, we consider incorporating the temporal distance (i.e., the difference between the current time and historical time) into the latent states at first.

Given the current time $t$ and a historical time $\tau < t$,
we obtain the transition representation of $\Delta g_{\tau}$ as follows:
$\mathbf{x}_{ \Delta g_{\tau}}
=\mathrm{Type}(\Delta g_{\tau})$,
where 
$\mathrm{Type}(\cdot)$ encodes the graph transition type\footnote{We use a codebook that maps each transition type to its corresponding learnable embedding.}.
Then, for every node $v\in V_t$, we calculate the node memory embeddings $\{\mathbf{m}_{v,\tau}\}$ for $\tau < t$\footref{ft:deltat}, where
\begin{equation}
\mathbf{m}_{v,\tau}=
\mathbf{W}_x\mathbf{z}_{v,\tau}
+\mathbf{W}_z\mathbf{z}_{\tau}
+\mathrm{Dist}(t,\tau)
+\mathbf{W}_a\mathbf{x}_{\Delta g_\tau},
\end{equation}
where $\mathbf{W}_x$, $\mathbf{W}_z$, and $\mathbf{W}_a$ are learnable weight matrices used for projection. 
$\mathrm{Dist}(t,\tau)$ encodes the relative temporal distance $t-\tau$\footnote{\label{ft:deltat}In practice, we compute the node memory embeddings for the history period up to the past $t_\mathrm{max}$ (i.e., $t - t_\mathrm{max} \leq \tau < t$), which is discussed in Appendix~\ref{sec:ap_rl_sensitivity}.}. Then, the node memory embedding is used to obtain the latent state. $\mathbf{W}_a\mathbf{x}_{\Delta g_\tau}$
allows the node memory embedding to incorporate the graph transition type.

\textbf{History-Aware Attention Fusion.}
For each node $v$ at time $t$, we construct a query embedding $\mathbf{q}_{v,t}$ by concatenating $\mathbf{z}_{v,t}$, $\mathbf{z}_{t}$, $\mathbf{x}_{\Delta g_{t-1}}$, and its {previous latent state $\mathbf{s}_{v,t-1}$}.
Then, we aggregate its historical memory $\{\mathbf{m}_{v,\tau}\}$ with multi-head attention.
For one attention head, the attention score from history $\tau$ to the current time $t$ is: $\beta_{v,t,\tau} =
\frac{(\mathbf{q}_{v,t} \mathbf{W}_Q)\,(\mathbf{m}_{v,\tau} \mathbf{W}_K)^\top}{\sqrt{d}}
\;+\; \mathbf{p}_{t-\tau}
\;+\; \Pi(\Delta g_{\tau},\Delta g_{t-1})$, 
where $\mathbf{W}_Q$ and $\mathbf{W}_K$ are learnable linear projection matrices,
and $d$ is the dimension of the projected node embedding.
{$\Pi(\Delta g_{\tau},\Delta g_{t-1})
= \left(\mathbf{x}_{\Delta g_{\tau}}\right)^\top \mathbf{x}_{\Delta g_{t-1}}$ is the similarity between the two transitions. This similarity is used as the reinforcement to account for how previous transitions ($\Delta g_{\tau}$) influence recent dynamics ($\Delta g_{t-1}$, which is the most recently observed transition).}\\
Then, we obtain $\tilde{\beta}_{v,t,\tau}$ using exponential normalization across $\tau$. Based on this,
the history embedding for node $v$ (one attention head) is $\mathbf{h}_{v,t}
=\sum_\tau\tilde{\beta}_{v,t,\tau}\,
\left(\mathbf{m}_{v,\tau}\mathbf{W}_V\right)$,
where $\mathbf{W}_V$ is the learnable linear projection matrix. The final history embedding is obtained by mixing multiple attention heads with an MLP, yielding $\tilde{\mathbf{h}}_{v,t}$.\\
Finally, we sum \( \mathbf{z}_{v,t} \), the history embedding \( \tilde{\mathbf{h}}_{v,t} \), and the previous latent state \( \mathbf{s}_{v,t-1} \) to obtain the latent node state at the current time \( \mathbf{s}_{v,t} \) (LayerNorm~\citep{ba2016layer} is used to stabilize training):
\begin{equation}
\label{eq:finalstate}
\mathbf{s}_{v,t} = \mathrm{LayerNorm}(\tilde{\mathbf{h}}_{v,t} +  \mathbf{z}_{v,t} +  \mathbf{s}_{v,t-1} ).    
\end{equation}

\vspace{-5pt}
\subsubsection{Theoretical Analysis of State-Aware Graph Transformer}
\vspace{-5pt}
Here, we provide a theoretical analysis of why our state-aware graph transformer has advantages over the current state-of-the-art graph transformer, i.e., SGFormer~\citep{sgformer}, in terms of state modeling.
For this analysis, we start from the message-passing mechanisms of SGFormer and WorldGraph, and then derive an upper bound on the difference between the final structure-aware node embedding they produce and the ideal structure-aware node embedding (i.e., the embedding obtained when the sampled subgraph provides full coverage).

Since SGFormer is trained using a subgraph with a fixed hop size, let the obtained subgraph be \(g^{(\mathrm{SG})}\). In contrast, our graph transformer uses multiple-hop subgraphs and random-walk sampled subgraphs. These subgraphs are denoted as \(\{g^{(\mathrm{OURS})}_i\}_{i=1}^{L_{\max}+M}\). Ideally, the subgraphs are generated from neighborhoods spanning arbitrarily large hop distances,
and the random-walk sampling is performed with a sufficiently large number of steps.
Let $\zeta$ denote the (sufficient) number of  subgraphs.
We obtain \(\{g^{(\mathrm{IDEAL})}_i\}_{i=1}^{\zeta}\).
By substituting these into $\mathbf{s}_{v,t}$, we can obtain the latent states produced by different mechanisms, which we denote as \(\mathbf{s}_{v,t}^{(\mathrm{SG})}\), \(\mathbf{s}_{v,t}^{(\mathrm{OURS})}\), and \(\mathbf{s}_{v,t}^{(\mathrm{IDEAL})}\).
\begin{theorem}
\label{thm:graphtransformer}
    The difference between \(\mathbf{s}_{v,t}^{(\mathrm{OURS})}\) and \(\mathbf{s}_{v,t}^{(\mathrm{IDEAL})}\) is \(\Delta_{\mathrm{OURS}}=\left\|\mathbf{s}_{v,t}^{(\mathrm{OURS})}-\mathbf{s}_{v,t}^{(\mathrm{IDEAL})}\right\|_{2}\), and 
    the difference between \(\mathbf{s}_{v,t}^{(\mathrm{SG})}\) and \(\mathbf{s}_{v,t}^{(\mathrm{IDEAL})}\) is \(\Delta_{\mathrm{SG}}=\left\|\mathbf{s}_{v,t}^{(\mathrm{SG})}-\mathbf{s}_{v,t}^{(\mathrm{IDEAL})}\right\|_{2}\). The upper bound of \(\Delta_{\mathrm{OURS}}\) is smaller than that of \(\Delta_{\mathrm{SG}}\).
\end{theorem}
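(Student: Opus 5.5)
The plan is to push the comparison through the state recursion of equation~\ref{eq:finalstate}, reducing it to a comparison of the structure-aware embeddings $\mathbf{z}_{v,t}$ from Stage~1. Because the argument is informal, I would first make explicit regularity assumptions. The message-passing aggregation, the path attention, the fusion attention, the memory projections $\mathbf{W}_x,\mathbf{W}_z$, and the history attention producing $\tilde{\mathbf{h}}_{v,t}$ are all Lipschitz in their inputs. LayerNorm is Lipschitz on inputs whose variance is bounded away from zero, with constant $C_{\mathrm{LN}}$. Node features are bounded, $\|\mathbf{x}_u\|_2\le B$.

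Under these assumptions, the first step subtracts the two instances of equation~\ref{eq:finalstate} and applies the triangle inequality:
\begin{equation*}
\Delta_{\bullet}\le C_{\mathrm{LN}}\Big(\|\tilde{\mathbf{h}}^{\bullet}_{v,t}-\tilde{\mathbf{h}}^{\mathrm{IDEAL}}_{v,t}\|_2+\|\mathbf{z}^{\bullet}_{v,t}-\mathbf{z}^{\mathrm{IDEAL}}_{v,t}\|_2+\|\mathbf{s}^{\bullet}_{v,t-1}-\mathbf{s}^{\mathrm{IDEAL}}_{v,t-1}\|_2\Big),
\end{equation*}
where $\bullet\in\{\mathrm{SG},\mathrm{OURS}\}$.

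The second step treats the history term. The memories $\mathbf{m}_{v,\tau}$ depend on $\mathbf{z}_{v,\tau}$ and $\mathbf{z}_\tau$ linearly, while the terms $\mathrm{Dist}$ and $\Pi$ are identical across the three mechanisms. Softmax attention is Lipschitz in its keys and values on bounded domains. Hence the history error is bounded by $C_h\max_{\tau}\|\mathbf{z}^{\bullet}_{v,\tau}-\mathbf{z}^{\mathrm{IDEAL}}_{v,\tau}\|_2$, plus a contribution from $\mathbf{s}_{v,t-1}$ entering the query. Unrolling the recursion over the window $t_{\max}$, or assuming the contraction $C_{\mathrm{LN}}(1+C_h')<1$, yields
\begin{equation*}
\Delta_{\bullet}\le K\max_{t-t_{\max}\le\tau\le t}\epsilon_{\bullet}(\tau),\qquad \epsilon_{\bullet}(\tau):=\|\mathbf{z}^{\bullet}_{v,\tau}-\mathbf{z}^{\mathrm{IDEAL}}_{v,\tau}\|_2 .
\end{equation*}
The constant $K$ is common to both mechanisms, since only the subgraphs differ. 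The comparison of upper bounds therefore reduces to comparing bounds on $\epsilon$.

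The third step is the core of the proof: bounding $\epsilon$ through subgraph coverage. I would write the ideal fused embedding as an attention-weighted combination over the $\zeta$ ideal subgraphs, $\mathbf{z}^{\mathrm{IDEAL}}_{v}=\sum_{i=1}^{\zeta}\alpha_i\,\phi(g^{(\mathrm{IDEAL})}_i)$, where $\|\phi\|\le C_\phi B$. A truncated mechanism recovers exactly the components whose information lies inside its sampled subgraphs and loses the rest. This gives $\epsilon_{\bullet}\le 2C_\phi B\,(1-\mathrm{Cov}_{\bullet})$, where $\mathrm{Cov}_{\bullet}$ is the attention mass of ideal components captured. For SGFormer, $\mathrm{Cov}_{\mathrm{SG}}$ is the mass supported within its fixed hop radius $L$. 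For WorldGraph, the hop views $\ell=1,\dots,L_{\max}$ with $L_{\max}\ge L$ already contain $g^{(\mathrm{SG})}$. The random-walk paths additionally reach nodes beyond $L_{\max}$ hops with positive probability: a walk of length $>L_{\max}$ from $v$ exits the ball whenever the graph connects outward. So $\mathrm{Cov}_{\mathrm{OURS}}\ge\mathrm{Cov}_{\mathrm{SG}}+\sum_{i}\Pr[\text{path}_i\text{ leaves ball}]\cdot(\text{mass})$, and the inequality is strict unless the outside mass is zero. The global linear-attention context of SGFormer is also included in our fusion, as noted in the Stage~1 footnote, so it does not break the nesting.

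I expect the main obstacle to be this coverage step. The quantity $\mathrm{Cov}$ is not canonically defined for attention fusion, and with learned, input-dependent weights the truncated embedding need not simply drop terms. I would first handle it by defining the ideal embedding so that each truncated mechanism corresponds to zeroing, with renormalization, the weights on subgraphs it does not observe. The renormalization error is bounded by $2(1-\mathrm{Cov})\max\|\phi\|$, and this bound is monotone under inclusion of the observed set. The nesting $g^{(\mathrm{SG})}\subseteq\bigcup_i g^{(\mathrm{OURS})}_i$ then gives $\epsilon_{\mathrm{OURS}}$'s bound $\le$ $\epsilon_{\mathrm{SG}}$'s bound. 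Combined with the common constant $K$, this yields the claimed ordering of the upper bounds on $\Delta_{\mathrm{OURS}}$ and $\Delta_{\mathrm{SG}}$, with strictness given any positive out-of-radius mass.
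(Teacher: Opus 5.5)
Your argument holds under the assumptions you list, but it takes a different route from the paper at both stages.

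\textbf{History and recursion.} The paper freezes $\Delta g_{t-1}$ and $\mathbf{s}_{v,t-1}$ (and, implicitly, the history memories) to be identical for SGFormer, WorldGraph and the ideal encoder. That gives $\mathbf{s}_{v,t}=\mathcal{F}(\mathbf{z}_{v,t})$ for a single $C_L$-Lipschitz map. You instead propagate the error through the history attention and the state recursion. This costs you extra regularity (a LayerNorm variance floor, bounded softmax domains) and an unrolling or contraction step, but it does not artificially fix the temporal inputs.

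\textbf{Core comparison.} The paper writes $\mathbf{z}_{v,t}=\sum_i\psi(g_i)$ with a method-independent $\psi$ bounded by $C_p$. The error is then the sum of the missing terms, so the two bounds are $(\zeta-1)C_p$ and $(\zeta-(L_{\max}+M))C_p$. This is a pure count: it is strict as soon as $L_{\max}+M>1$, and it needs no relation between SGFormer's subgraph and WorldGraph's views. Your renormalized coverage bound $2(1-\mathrm{Cov})\max\|\phi\|$ is more faithful to the softmax fusion, since the paper's additive model ignores that $\tilde{\alpha}$ changes when the candidate set changes. However, normalized masses cannot be compared by counting, so your nesting hypothesis ($L_{\max}\ge L$, or the footnote's inclusion of SGFormer's global context in the fusion) is genuinely needed. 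Without it, a single heavily weighted SGFormer view could carry more mass than all of WorldGraph's views.

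\textbf{Strictness.} Within your own model, strictness is automatic, because softmax gives every extra hop or path component strictly positive weight. The random-walk ``leaving the ball'' argument, which mixes node coverage with attention-mass coverage, can therefore be dropped.

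\textbf{Two small repairs.} First, the memory window $t_{\max}$ does not cut off the recursion through $\mathbf{s}_{v,t-1}$. Unroll to a common initial state (the resulting constant is still shared by both methods) and take the maximum over all $\tau\le t$, not only the window. Second, $\mathbf{m}_{v,\tau}$ and $\mathbf{q}_{v,t}$ contain the mean-pooled $\mathbf{z}_\tau$, so replace $\epsilon_{\bullet}(\tau)$ by its maximum over all nodes. This is harmless, since the nesting holds node by node.
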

\vspace{-10pt}
\begin{proof}
    The detailed proof is provided in Appendix~\ref{sec:ap-c-thm1}.
\end{proof}

\vspace{-5pt}
\subsection{Transition-Aware Reinforcement Learning}
\vspace{-5pt}
Given the latent world state $s_t$, the transition model predicts heterogeneous graph changes with highly imbalanced frequencies and varying structural relevance.
To account for these differences during optimization, we formulate transition learning with verifiable rewards and adapt GRPO to graph-world transitions.
Transition-aware dynamic grouping emphasizes rare transitions, while structure-aware rewards account for structural relevance.

\subsubsection{Transition-Aware Dynamic Group Sampling}
Here, we introduce transition-aware dynamic grouping: the group assignment at time step $t$ is biased towards rare historical transitions (i.e., the transitions that appear less frequently in the history). This design increases the coverage of low-frequency transitions, providing them with more training signal. As a result, the algorithm can learn robust policies for rare transitions.

\textbf{Dynamic Transition Frequency Estimation.}
To achieve such a dynamic group sampling algorithm, we need to estimate the transition frequency at first.
At time $t$, let the transition-conditioned groups be $\mathcal{J}_t$. {Each group $J_i\in\mathcal{J}_t$ corresponds to a specific graph transition type and stores the transitions from time steps before $t$ that belong to this transition type.}
For a transition group $J_i\in\mathcal{J}_t$, we define the size of group $J_i$ as  $|J_i|$.
Based on $|J_i|$, we can compute the frequency of transitions of different types before time step 
$t$ as $\widehat{p}(J_i)$. \\
{We then assign a rarity weight to each transition group, so that groups appearing less frequently in the prefix receive larger weights: $\omega(J_i)=\left(\widehat{p}(J_i)\right)^{-\gamma}, \gamma>0$\footnote{\label{ft:para}In practice, we set $\gamma$ as 1, $\theta$ as 1, and $\eta$ as 1, as discussed in Appendix~\ref{sec:ap_rl_sensitivity}.},
and the probability of selecting group $J_i$ is $\mathrm{Prob}(J_i)
    =
    \frac{\exp\!\left(\omega(J_i)\right)}
    {\sum_{J_j\in\mathcal{J}_t}
    \exp\!\left(\omega(J_j)\right)}$.\\
\textbf{Rarity-based Rollout Budget Allocation.} Finally, to make earlier rare transitions receive a larger effective group size, we allocate the total rollout budget $\mathcal{B}$ across groups proportionally. 
We reserve the same minimum number of rollouts\footnote{\label{ft:minmumrollout}In practice, we set the minimum number of rollouts as 2. We do this to ensure every  group receives at least a minimal number of rollouts, preventing overly noisy return estimates from groups with too few samples.} for every group and use largest-remainder rounding for the remaining budget, such that $\sum_{J_i}B_t(J_i)=\mathcal{B}$, where $B_t(J_i)$ is the allocated number of rollouts for transition group $J_i$ at time $t$.
Thus, groups corresponding to transitions that are rarer in the history are assigned more rollout samples at time $t$, {increasing the relative training signal for these frequency-imbalanced transition patterns.}

\vspace{-5pt}
\subsubsection{Structure-aware Reward Design}
\vspace{-5pt}
Graph transitions differ in their structural relevance.
We therefore weight the verifiable reward using two complementary signals: structural importance and historical variability, emphasizing transitions involving more relevant nodes.

\textbf{Node Importance Calculation.}
For a node $v\in V_t$, we define a structural importance term based on its node degree ($\deg(\cdot)$): $\mathcal{I}_{\mathbf{s}}(v)=\left(\deg(v)\right)^{\theta}$, where $\theta>0$\footref{ft:para} controls how strongly we emphasize highly-connected nodes.
Let $f_{<t}(v)$ be the empirical frequency that node $v$ is involved in a change (i.e., the fraction of time steps at which node $v$ is modified),
and define the variability term as: $\mathcal{I}_\mathbf{v}(v)=\left(f_{<t}(v)\right)^{\eta}$,
with $\eta>0$\footref{ft:para}.
We combine the two terms into a single importance: $\mathcal{I}(v)=\frac{\mathcal{I}_\mathbf{s}(v) \cdot \mathcal{I}_\mathbf{v}(v)}{\sum_{v'\in V_t}\left(\mathcal{I}_\mathbf{s}(v') \cdot\mathcal{I}_\mathbf{v}(v')\right)}$.
Based on the node importance, we design a structure-aware reward.

{\textbf{Task-Specific Structure-Aware Reward Design.}
Graph-transition tasks, including node-, edge-, and graph-level tasks, produce outputs at different granularities. We therefore 
define a verifiable reward consistent with the output and evaluation metric. 
Then, we define the RL reward as follows.\\
{For node-level tasks\footnote{For edge-level (or graph-level) tasks, we use the change edge (or graph) sets, respectively.}, we denote the predicted and ground-truth change node sets by $\Delta\hat{Q}$ and $\Delta Q$, respectively. 
A prediction is counted as a true positive only when it exactly matches a ground-truth change. 
Therefore, we define the precision $\mathcal{P}$, recall $\mathcal{R}$, and the structure-aware reward $R_{\mathrm{structure}}$:
\begin{equation}
\vspace{-3pt}
    \mathcal{P} = \frac{\sum_{v \in\Delta\hat{Q}\cap\Delta Q}\mathcal{I}(v)}{\sum_{v\in\Delta\hat{Q}}\mathcal{I}(v)}, \quad \mathcal{R}=\frac{\sum_{v \in\Delta\hat{Q}\cap\Delta Q}\mathcal{I}(v)}{ \sum_{v\in\Delta Q}\mathcal{I}(v)}, \quad 
    R_{\mathrm{structure}}= \dfrac{2\,\mathcal{P}\,\mathcal{R}}{\mathcal{P}+\mathcal{R}}.
\end{equation}
Based on this, we design rewards for node-level, edge-level, and graph-level tasks as follows.\\
{Although the three graph-world tasks operate at different granularities, their rewards share the same principle: a prediction should identify the correct structural changes and estimate the corresponding {property} changes. 
We therefore use the unified reward $R=\lambda R_{\mathrm{structure}}+(1-\lambda)R_{\mathrm{property}}$,
where  $R_{\mathrm{property}}$ is calculated based on the ground-truth property and the predicted property. $\lambda$ balances the two terms. The concrete definitions for node-, edge-, and graph-level tasks are provided in Appendix~\ref{sec:task_specific_reward_details}. Based on these rewards, we train the policy network 
$\Delta g_t = f_\Delta(g_t,\Delta g_{t-1},\mathbf{s}_{t})$\footnote{$\mathbf{s}_t$ includes all latent node states $\{\mathbf{s}_{v,t}\}$ in $g_t$. $f_\Delta$ is an MLP.}
to maximize the expected cumulative return and obtain the transition $\Delta g_t$ at time step $t$.
\vspace{-5pt}
\subsubsection{Theoretical Analysis of Transition-Aware RL Algorithm}
\vspace{-5pt}
In GRPO~\citep{shao2024deepseekmathpushinglimitsmathematical}, the group size is the same for different transitions. The distribution of group sizes across different transitions can be viewed as uniform. In our transition-aware RL algorithm, our transition group-size distribution is determined based on the historical interaction frequency of each transition.
Following~\citep{kim2026efficiency}, we use the effective variance ratio (EVR). A smaller EVR means lower gradient noise and more stable RL training. 
\begin{theorem}
\label{thm:rl}
In GRPO, the group size 
is
$\rho^{\mathrm{GRPO}}=C_a$, where $C_a$ is a constant for different graph transition types. In our RL algorithm, the group size
corresponding to each transition type (the corresponding group is $J_i$) is
$\rho^{\mathrm{OURS}}=C_a + \mathrm{Prob}(J_i)\cdot \mathcal{B}$.
Based on~\citep{kim2026efficiency}, for a group size $\rho_i$, an upper bound on EVR is
$\mathrm{EVR}(\rho_i)=1+\frac{\delta-1}{2\rho_i} + M_\mathrm{EVR}$,
where $M_\mathrm{EVR} \geq \mathcal{O}\!\left(\frac{1}{\rho_i^2}\right)$ is a constant and $\delta$ is the kurtosis of the reward distribution~\citep{kim2026efficiency}.
Then, for a given transition type, the group produced by our RL algorithm has a smaller upper bound on the EVR than the group produced by GRPO.
\end{theorem}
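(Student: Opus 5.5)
The plan is to compare the two upper bounds on the EVR term by term for a fixed transition type $J_i$. The reward distribution, and hence its kurtosis $\delta$, is treated as the same under both algorithms, since the rewards are defined in the same way. The only quantity that differs between GRPO and our method is the group size.

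\textbf{Step 1: the group sizes.} We first check that $\rho^{\mathrm{OURS}} \geq \rho^{\mathrm{GRPO}}$, with strict inequality whenever $\mathcal{B} > 0$. This follows because $\mathrm{Prob}(J_i)$ is a softmax of the finite rarity weights $\omega(J_i)=\widehat{p}(J_i)^{-\gamma}$. A softmax of finite values is strictly positive, so $\mathrm{Prob}(J_i)\,\mathcal{B} > 0$, and therefore $\rho^{\mathrm{OURS}} = C_a + \mathrm{Prob}(J_i)\mathcal{B} > C_a = \rho^{\mathrm{GRPO}}$.

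\textbf{Step 2: the leading term.} We show that $\delta \geq 1$ for any reward distribution. Kurtosis is the standardized fourth moment $\mathbb{E}[(R-\mu)^4]/\sigma^4$. By Jensen's inequality (or Cauchy--Schwarz), $\mathbb{E}[(R-\mu)^4] \geq (\mathbb{E}[(R-\mu)^2])^2 = \sigma^4$, which gives $\delta \geq 1$. Hence $\rho \mapsto \frac{\delta-1}{2\rho}$ is nonincreasing on $\rho>0$, and it is strictly decreasing when $\delta>1$. The degenerate case $\delta = 1$ occurs only for a two-point symmetric reward, and there the leading terms are equal.

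\textbf{Step 3: the remainder and the main obstacle.} The hard part is the remainder $M_{\mathrm{EVR}}$. The statement calls it a constant but also indicates that it is of order $1/\rho_i^2$. I would read it as $M_{\mathrm{EVR}}(\rho) = c/\rho^2$ with $c \geq 0$, taken from the higher-order expansion in \citep{kim2026efficiency}. Under that reading it is also nonincreasing in $\rho$, and summing with Step 2 gives $\mathrm{EVR}(\rho^{\mathrm{OURS}}) < \mathrm{EVR}(\rho^{\mathrm{GRPO}})$.

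If instead $M_{\mathrm{EVR}}$ is a genuine constant independent of $\rho$, it cancels when the two bounds are subtracted. The difference then equals $\frac{\delta-1}{2}\bigl(\frac{1}{\rho^{\mathrm{OURS}}}-\frac{1}{\rho^{\mathrm{GRPO}}}\bigr) \leq 0$, which gives the same conclusion.

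Either way, the strict inequality needs $\delta > 1$. I would state this as a mild nondegeneracy assumption, or else conclude the non-strict version of the inequality.
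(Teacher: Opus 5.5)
Your proposal is correct and follows the paper's own argument: show $\rho^{\mathrm{OURS}} = C_a + \mathrm{Prob}(J_i)\,\mathcal{B} > C_a = \rho^{\mathrm{GRPO}}$, then use that the bound $1+\frac{\delta-1}{2\rho}+M_{\mathrm{EVR}}$ is decreasing in $\rho$ when $\delta>1$. The paper simply asserts $\mathrm{Prob}(J_i)\,\mathcal{B}>0$, assumes $\delta>1$ following the cited EVR analysis, and treats $M_{\mathrm{EVR}}$ as not affecting monotonicity, so your softmax-positivity argument, the Jensen bound $\delta\ge 1$, and your two readings of $M_{\mathrm{EVR}}$ are the same proof made more careful.
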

\vspace{-10pt}
\begin{proof}
    The detailed proof is provided in Appendix~\ref{sec:ap-c-thm2}.
\end{proof}
\vspace{-10pt}
}
}

\begin{table*}[!t]
    \centering
    \caption{\small \textbf{Overall Performance on the 3 Graph-Transition Tasks.} 
    {Add., Rem., and Sem. denote addition, deletion, and property-change F1, respectively. Higher is better for F1, Macro-F1, and NDCG@10, while lower is better for MAE and RMSE. The best results are shown in \textbf{bold} and the second best results are \underline{underlined}. We use an existing Controller design~\citep{dinella2020hoppity} to enable the baselines in the first three categories to predict node-, edge-, and graph-level changes.}}
    \label{tab:worldgraph_results}
    \scriptsize

    {\small\bfseries (a) Node-Level Tasks\par}\vspace{0.8mm}
    \setlength{\tabcolsep}{2.0pt}%
    \renewcommand{\arraystretch}{1.12}%
    \resizebox{\linewidth}{!}{%
    \begin{tabular}{l|l|cccc|cccc|cccc}
        \toprule
        \multirow{2}{*}{Method} & \multirow{2}{*}{Type} & \multicolumn{4}{c|}{TGBN-Trade} & \multicolumn{4}{c|}{TGBN-Genre} & \multicolumn{4}{c}{TGBN-Reddit} \\
        & & Add. F1 & Rem. F1 & Sem. F1 & NDCG@10 & Add. F1 & Rem. F1 & Sem. F1 & NDCG@10 & Add. F1 & Rem. F1 & Sem. F1 & NDCG@10 \\
        \midrule
        GCN & MPNN & 0.4718 & 0.1540 & 0.5741 & 0.6357 & 0.7112 & 0.8488 & 0.6375 & 0.2469 & 0.9706 & 0.8976 & 0.6284 & 0.2161 \\
        GAT & MPNN & 0.5248 & 0.1385 & 0.6275 & 0.6386 & 0.7253 & 0.8598 & 0.6523 & 0.2699 & 0.9720 & 0.8972 & 0.6440 & 0.2237 \\
        GraphSAGE & MPNN & 0.5864 & 0.0000 & 0.6595 & 0.6246 & 0.7256 & 0.8538 & 0.6496 & 0.2780 & 0.9705 & 0.8902 & 0.6435 & 0.2519 \\
        SGFormer & Graph Transformer & 0.6874 & 0.0308 & 0.6732 & 0.6324 & 0.7252 & 0.8564 & 0.6497 & 0.2822 & 0.9731 & 0.9024 & 0.6445 & 0.2413 \\
        NodeFormer & Graph Transformer & 0.7069 & 0.0878 & 0.6669 & 0.6336 & 0.7080 & 0.8394 & 0.6470 & 0.2344 & 0.8011 & 0.7329 & 0.5409 & 0.1275 \\
        GraphGPS & Graph Transformer & 0.8033 & 0.0000 & 0.6542 & 0.6155 & 0.7302 & 0.8596 & 0.6533 & 0.2824 & 0.9684 & 0.8931 & 0.6449 & 0.2376 \\
        Graph-JEPA & Pretrained & 0.5585 & 0.0419 & 0.5777 & 0.6357 & 0.6563 & 0.7634 & 0.6143 & 0.2487 & 0.9573 & 0.8861 & 0.5999 & 0.1520 \\
        MDGFM & Pretrained & 0.4497 & 0.1517 & 0.5616 & 0.6406 & 0.7194 & 0.8426 & 0.6177 & 0.2107 & 0.9558 & 0.8607 & 0.6151 & 0.1676 \\
        TGN & Temporal Model & 0.8492 & 0.0559 & 0.6666 & 0.6181 & 0.7378 & 0.8698 & 0.6505 & 0.2683 & 0.9726 & 0.8982 & 0.6457 & 0.2259 \\
        TIDFormer & Temporal Model & 0.8442 & 0.1129 & 0.6740 & 0.6299 & 0.7409 & 0.8754 & 0.6529 & 0.2556 & 0.9705 & 0.9066 & 0.6429 & 0.2150 \\
        \midrule
        \textbf{WorldGraph} & \textbf{GWM} & \underline{0.9245} & \underline{0.6430} & \underline{0.7117} & \textbf{0.6484} & \underline{0.7460} & \underline{0.8801} & \underline{0.6557} & \underline{0.4418} & \underline{0.9755} & \underline{0.9317} & \underline{0.6458} & \underline{0.3838} \\
        \textbf{WorldGraph-Pretrain} & \textbf{GWM} & \textbf{0.9696} & \textbf{0.6606} & \textbf{0.7354} & \underline{0.6469} & \textbf{0.7474} & \textbf{0.8849} & \textbf{0.6685} & \textbf{0.4448} & \textbf{0.9779} & \textbf{0.9561} & \textbf{0.6492} & \textbf{0.3901} \\
        \bottomrule
    \end{tabular}}

    \vspace{2.0mm}
    {\small\bfseries (b) Edge-Level Tasks\par}\vspace{0.8mm}
    \setlength{\tabcolsep}{2.0pt}%
    \renewcommand{\arraystretch}{1.12}%
    \resizebox{\linewidth}{!}{%
    \begin{tabular}{l|l|ccc|ccc|ccc|ccc}
        \toprule
        \multirow{2}{*}{Method} & \multirow{2}{*}{Type} & \multicolumn{3}{c|}{TGBN-Trade} & \multicolumn{3}{c|}{UN Vote} & \multicolumn{3}{c|}{Contact} & \multicolumn{3}{c}{SocialEvo} \\
        & & Add. F1 & Rem. F1 & Macro-F1 & Add. F1 & Rem. F1 & Macro-F1 & Add. F1 & Rem. F1 & Macro-F1 & Add. F1 & Rem. F1 & Macro-F1 \\
        \midrule
        GCN & MPNN & 0.2633 & 0.2973 & 0.2803 & 0.1337 & 0.1025 & 0.1181 & 0.0146 & 0.7946 & 0.4046 & 0.2387 & 0.5554 & 0.3970 \\
        GAT & MPNN & 0.2967 & 0.3841 & 0.3404 & 0.1393 & 0.0926 & 0.1159 & 0.0146 & 0.7518 & 0.3832 & 0.2337 & 0.5897 & 0.4117 \\
        GraphSAGE & MPNN & 0.3114 & 0.3865 & 0.3489 & 0.1649 & 0.1031 & 0.1340 & 0.0170 & 0.8174 & 0.4172 & 0.2629 & 0.5556 & 0.4093 \\
        SGFormer & Graph Transformer & 0.3174 & 0.3915 & 0.3544 & 0.1403 & 0.1058 & 0.1230 & 0.0141 & 0.8144 & 0.4142 & 0.2175 & 0.5233 & 0.3704 \\
        NodeFormer & Graph Transformer & 0.3031 & 0.3797 & 0.3414 & 0.1610 & 0.0840 & 0.1225 & 0.0124 & 0.5443 & 0.2784 & 0.1532 & 0.3496 & 0.2514 \\
        GraphGPS & Graph Transformer & 0.3153 & 0.3926 & 0.3540 & 0.1270 & 0.0952 & 0.1111 & 0.0272 & 0.8847 & 0.4559 & 0.2611 & 0.5198 & 0.3905 \\
        Graph-JEPA & Pretrained & 0.2571 & 0.2933 & 0.2752 & 0.1308 & 0.1083 & 0.1196 & 0.0165 & 0.8659 & 0.4412 & 0.2180 & 0.5312 & 0.3746 \\
        MDGFM & Pretrained & 0.2187 & 0.2975 & 0.2581 & 0.1188 & 0.1041 & 0.1114 & 0.0112 & 0.7174 & 0.3643 & 0.2285 & 0.5184 & 0.3734 \\
        TGN & Temporal Model & 0.3202 & 0.4010 & 0.3606 & 0.1886 & 0.1112 & 0.1499 & 0.0221 & 0.8776 & 0.4498 & 0.2518 & 0.5691 & 0.4104 \\
        TIDFormer & Temporal Model & 0.3217 & 0.4005 & 0.3611 & 0.2132 & 0.3089 & 0.2611 & 0.0329 & 0.8583 & 0.4456 & 0.2651 & 0.5219 & 0.3935 \\
        \midrule
        \textbf{WorldGraph} & \textbf{GWM} & \textbf{0.3952} & \underline{0.5325} & \underline{0.4640} & \textbf{0.4195} & \underline{0.5395} & \underline{0.4795} & \underline{0.1628} & \textbf{0.9009} & \underline{0.5319} & \underline{0.3841} & \underline{0.6567} & \underline{0.5204} \\
        \textbf{WorldGraph-Pretrain} & \textbf{GWM} & \underline{0.3948} & \textbf{0.5344} & \textbf{0.4646} & \underline{0.4165} & \textbf{0.5996} & \textbf{0.5080} & \textbf{0.1819} & \underline{0.8991} & \textbf{0.5405} & \textbf{0.3890} & \textbf{0.6609} & \textbf{0.5249} \\
        \bottomrule
    \end{tabular}}

    \vspace{2.0mm}
    {\small\bfseries (c) Graph-Level Tasks\par}\vspace{0.8mm}
    \setlength{\tabcolsep}{8.0pt}%
    \renewcommand{\arraystretch}{1.12}%
    \resizebox{\linewidth}{!}{%
    \begin{tabular}{@{\hspace{2pt}}l|@{\hspace{1pt}}l|ccc|ccc|ccc@{\hspace{2pt}}}
        \toprule
        \multirow{2}{*}{Method} & \multirow{2}{*}{Type} & \multicolumn{3}{c|}{Flights} & \multicolumn{3}{c|}{Contact} & \multicolumn{3}{c}{Enron} \\
        & & MAE & RMSE & Macro-F1 & MAE & RMSE & Macro-F1 & MAE & RMSE & Macro-F1 \\
        \midrule
        GCN & MPNN & 0.8047 & 1.2736 & 0.4902 & 0.6842 & 0.9884 & 0.4357 & 1.1674 & 1.7609 & 0.3955 \\
        GAT & MPNN & 0.8354 & 1.3303 & 0.4173 & 0.7470 & 1.0338 & 0.4238 & 1.1887 & 1.8048 & 0.3724 \\
        GraphSAGE & MPNN & 0.8163 & 1.2984 & 0.4763 & 0.6811 & 0.9832 & 0.4402 & 1.1702 & 1.7731 & 0.4052 \\
        SGFormer & Graph Transformer & 0.8043 & 1.2768 & 0.4837 & 0.6823 & 0.9842 & 0.4334 & 1.1533 & 1.7428 & 0.3996 \\
        NodeFormer & Graph Transformer & 0.7941 & 1.2835 & 0.3974 & 0.7554 & 1.0403 & 0.3970 & 1.1573 & 1.7733 & 0.4139 \\
        GraphGPS & Graph Transformer & 0.7723 & 1.2482 & 0.4826 & 0.6378 & 0.9452 & 0.4482 & 1.1713 & 1.7567 & 0.4090 \\
        Graph-JEPA & Pretrained & 0.8407 & 1.3243 & 0.4413 & 0.7109 & 1.0177 & 0.4160 & 1.2211 & 1.7961 & 0.3483 \\
        MDGFM & Pretrained & 0.8440 & 1.3357 & 0.3904 & 0.7385 & 1.0326 & 0.4001 & 1.2257 & 1.8545 & 0.3703 \\
        TGN & Temporal Model & 0.7956 & 1.2615 & 0.4796 & 0.6659 & 0.9713 & 0.4151 & 1.1724 & 1.7767 & 0.4155 \\
        TIDFormer & Temporal Model & 0.7929 & 1.2480 & 0.4759 & 0.6935 & 0.9955 & 0.4250 & 1.1980 & 1.8047 & 0.4102 \\
        \midrule
        \textbf{WorldGraph} & \textbf{GWM} & \underline{0.7069} & \underline{1.1254} & \underline{0.5840} & \underline{0.5648} & \underline{0.8972} & \underline{0.5438} & \underline{1.1103} & \underline{1.5447} & \textbf{0.4255} \\
        \textbf{WorldGraph-Pretrain} & \textbf{GWM} & \textbf{0.6841} & \textbf{1.0985} & \textbf{0.5891} & \textbf{0.5540} & \textbf{0.8865} & \textbf{0.5547} & \textbf{1.0983} & \textbf{1.5383} & \underline{0.4244} \\
        \bottomrule
    \end{tabular}}
    \vspace{-20pt}
\end{table*}
\section{Experiments}
\label{sec:experiments}
\vspace{-10pt}
In this section, we conduct extensive experiments to answer the following research questions:\\
\noindent\textbf{(RQ1):} Compared with existing state-of-the-art methods for solving graph world modeling problems, does WorldGraph have advantages in terms of model quality?
\\
\noindent\textbf{(RQ2):} How effective are the components of WorldGraph?
\\
\noindent\textbf{(RQ3):} How sensitive is WorldGraph to its key model configurations?
\\ 
\noindent\textbf{(RQ4):} 
Is it effective to integrate WorldGraph into the existing graph world model problem settings?
\vspace{-10pt}

\subsection{Experimental Setup}
\label{sec:experimental_settings}
\vspace{-5pt}
\textbf{Benchmark.}
We construct the graph world benchmark, GWM-Zero, at three levels, including node, edge, and graph, to comprehensively evaluate \textit{WorldGraph}. The benchmark is built from TGBN-Trade~\citep{huang2023temporal}, TGBN-Genre~\citep{huang2023temporal}, TGBN-Reddit~\citep{huang2023temporal}, UN Vote~\citep{poursafaei2022towards}, Flights~\citep{poursafaei2022towards}, Contact~\citep{poursafaei2022towards}, SocialEvo~\citep{poursafaei2022towards}, and Enron~\citep{poursafaei2022towards}. \\
\textbf{Baselines.} We select \textbf{13} baselines from four perspectives.
(1) From the graph-representation perspective, we include message passing-based neural networks (MPNN)---GCN~\citep{kipf2017semi}, GAT~\citep{velickovic2018graph}, and GraphSAGE~\citep{hamilton2017inductive}---as well as graph Transformers---SGFormer~\citep{sgformer}, NodeFormer~\citep{wu2022nodeformer}, and GraphGPS~\citep{RampasekGDLWB22}.
(2) From the temporal-modeling perspective, we compare with TGN~\citep{rossi2020temporal} and TIDFormer~\citep{peng2025tidformer}, which explicitly model time-stamped interactions and temporal dependencies.
(3) From the graph-pretraining perspective, we include Graph-JEPA~\citep{skenderi2023graphjepa} and MDGFM~\citep{wang2025mdgfm} to evaluate the benefit of self-supervised and multi-domain  pretraining.
(4) Under existing world-modeling problem settings, we include L$^3$P~\citep{Zhang0S21a}, C-SWM~\citep{KipfPW20}, and GWM-E~\citep{FengWLY25}.
\vspace{-5pt}
\subsection{Overall Performance (\textbf{RQ1})}
\vspace{-5pt}
As shown in Table~\ref{tab:worldgraph_results}, across all 3 task levels, WorldGraph and WorldGraph-Pretrain (Pretraining details are provided in Appendix~\ref{sec:worldgraph_pretraining}.) consistently outperform the baselines.  
\textbf{For node-level tasks},
the gains are particularly pronounced for node deletion on TGBN-Trade, where sparse deletion events cause GraphSAGE and GraphGPS to incorrectly predict the deletion nodes, yielding  $\mathrm{Rem.\  F1}=0$, while WorldGraph-Pretrain reaches Rem. F1 $=0.6606$, an absolute improvement of {50.66\%} over the second best method. 
For semantic-property prediction, WorldGraph-Pretrain improves over the second best methods by {16.24\%} on TGBN-Genre and {13.82\%} on TGBN-Reddit, showing that jointly modeling graph structure, historical states, and graph transitions is beneficial for heterogeneous node transition prediction tasks.
\textbf{For edge-level tasks}, the advantage is more evident, with an average Macro-F1 improvement of 13.72\%. 
{Compared with the strongest baselines on the corresponding dataset, its improvements on UN Vote ($\uparrow$24.69\%), Contact ($\uparrow$8.46\%), and SocialEvo ($\uparrow$11.37\%) demonstrate that the learned transition policy can capture the graph changes that are difficult for representation-only and temporal baselines.}
\textbf{For graph-level tasks}, WorldGraph also obtains the lowest MAE and RMSE and the highest Macro-F1, with average absolute reductions of $0.0757$ and $0.1376$ in MAE and RMSE, respectively, and an average Macro-F1 improvement of $7.18\%$ over the best external baseline for each dataset and metric.
These indicate that its advantage extends from discrete node and edge edits to continuous local-structure evolution.

\vspace{-5pt}
\subsection{Ablation Study (\textbf{RQ2})}
\vspace{-5pt}
{We evaluate five variants that respectively remove the multi-granularity graph encoder, the history-aware state encoder, the transition-aware RL mechanism, as well as its two internal components: dynamic group sampling and structure-aware reward. Figure~\ref{fig:worldgraph_ablation} summarizes their overall performance on Node-Level Tasks/TGBN-Trade and Graph-Level Tasks/Enron.
Removing any component degrades performance on both tasks. The largest reductions are caused by removing the multi-granularity graph encoder or the history-aware state encoder, which lead to absolute drops of $8.83\%$ and $10.20\%$ on Node-Level Tasks/Trade and $2.04\%$ and $2.11\%$ on Graph-Level Tasks/Enron, respectively, confirming that both current graph structure and historical evolution are essential for representing a graph world. Removing the complete RL mechanism, or removing either of its two proposed components---dynamic group sampling or structure-aware reward---also consistently reduces performance, with absolute drops of $6.27/0.85\%$, $4.19/0.55\%$, and $4.84/0.36\%$, respectively. The results show that the two components make complementary contributions to the final performance.}
\begin{figure*}[!t]
    \centering
    \includegraphics[height=3.8cm, width=\linewidth, keepaspectratio]{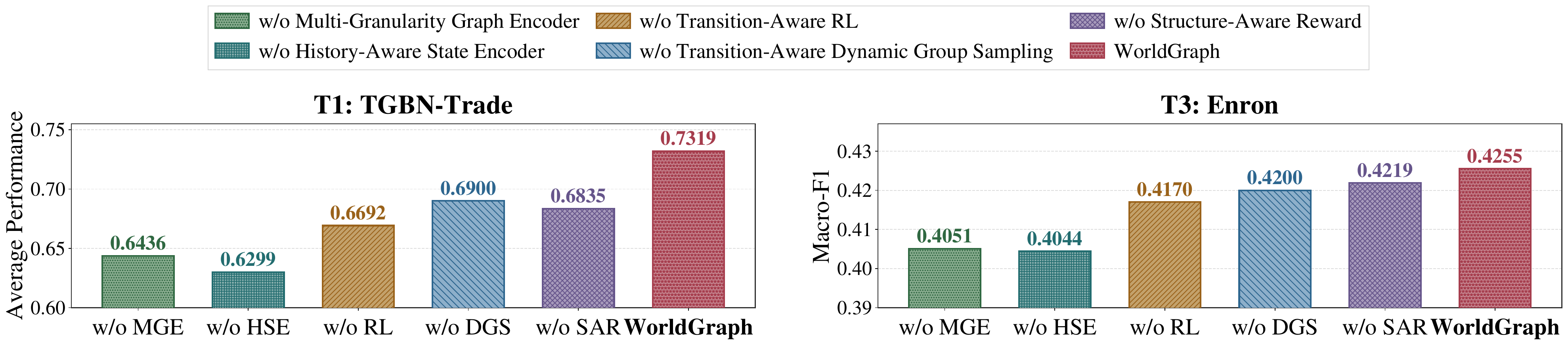}
    \vspace{-25pt}
    \caption{\small \textbf{Ablation Study on Node-Level Tasks/TGBN-Trade and Graph-Level Tasks/Enron.} 
    The left panel reports the equal-weight average of the 4 TGBN-Trade metrics, while the right panel reports Macro-F1.
    }
    \vspace{-10pt}
    \label{fig:worldgraph_ablation}
\end{figure*}
\clearpage

\vspace{-5pt}
\subsection{Sensitivity Analysis (\textbf{RQ3})}
\vspace{-5pt}
\label{sec:sensitivity_analysis}
\begin{wrapfigure}{r}{0.55\textwidth}
\vspace{-10pt}
    \centering
    \includegraphics[height=3.5cm, width=\linewidth, keepaspectratio]{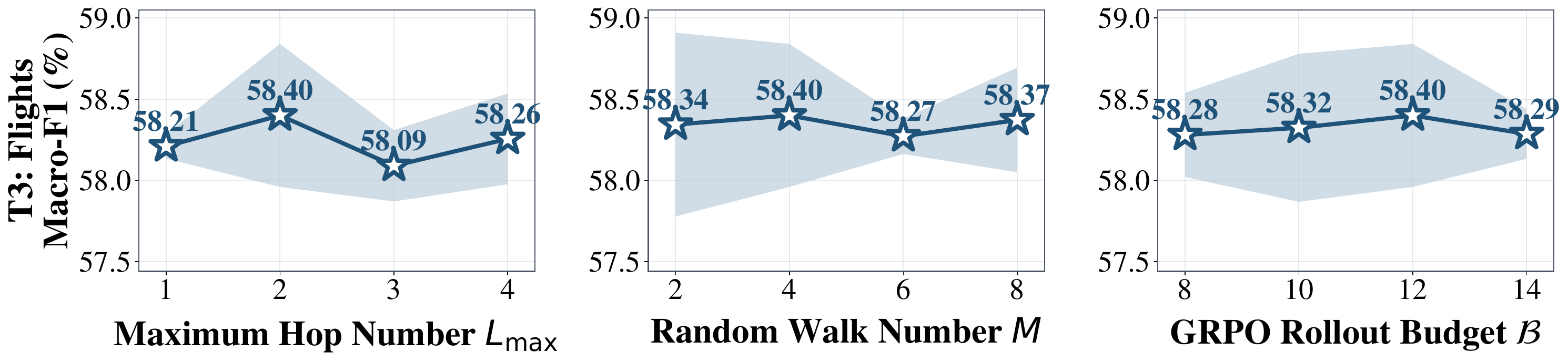}
    \vspace{-20pt}
    \caption{\small \textbf{Sensitivity Analysis on Graph-Level Tasks/Flights.} Performance remains remarkably stable across varying maximum hop numbers \(L_{\max}\), random-walk numbers \(M\), and GRPO rollout budgets \(\mathcal{B}\).}
    \vspace{-10pt}
    \label{fig:worldgraph_sensitivity}
\end{wrapfigure}
Figure~\ref{fig:worldgraph_sensitivity} shows the sensitivity of WorldGraph to three key hyperparameters on Graph-Level Tasks/Flights (Macro-F1). Overall, the performance varies by only 0.31 percentage points across all settings (ranging narrowly from 58.09\% to 58.40\%), demonstrating that WorldGraph is robust to these hyperparameter configurations. Specifically, varying the maximum hop number \(L_{\max}\) from 1 to 4 leads to small fluctuations between 58.09\% and 58.40\%, with \(L_{\max}=2\) achieving the peak performance. The random-walk number \(M\) also exhibits a very marginal impact: the Macro-F1 score remains virtually flat within the range of 58.27\%--58.40\% across \(M \in \{2, 4, 6, 8\}\). For the GRPO rollout budget \(\mathcal{B}\), performance stays stably within 58.28\%--58.40\% without noticeable deviation as \(\mathcal{B}\) increases from 8 to 14. These results confirm that WorldGraph is not sensitive to the choice of these structural and optimization hyperparameters, further validating its stability.
\vspace{-5pt}
\subsection{Comparison on Existing GWM Problem Settings (\textbf{RQ4})}
\vspace{-5pt}
\label{sec:Comparison on Existing GWM Problem Settings}
\begin{wrapfigure}{r}{0.45\textwidth}
\vspace{-10pt}
    \centering
    \includegraphics[width=1\linewidth]{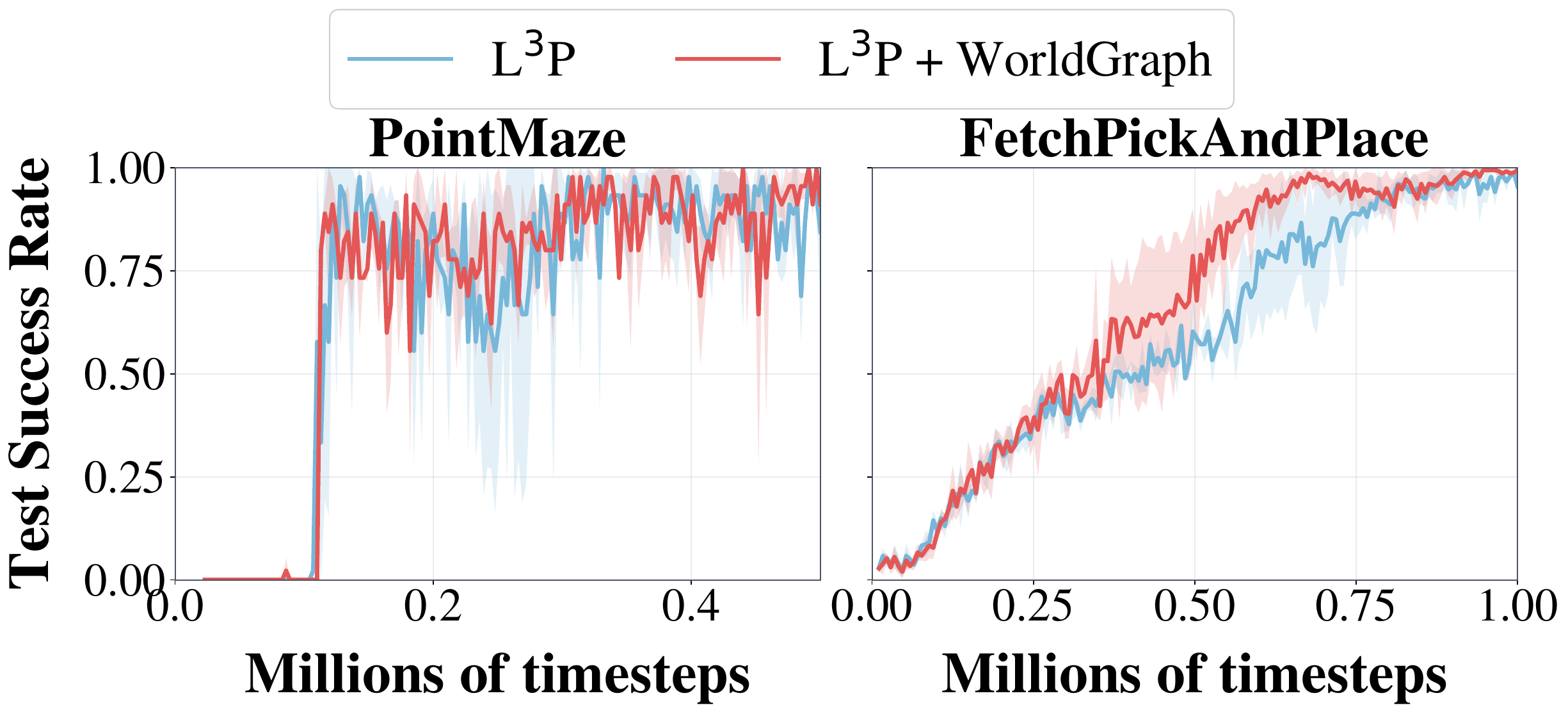}
    \vspace{-20pt}
    \caption{\small \textbf{Comparison with L$^3$P on PointMaze and FetchPickAndPlace.} Solid curves show the mean test success rate and shaded regions show the standard deviation.}
    \label{fig:l3p_fig4_combined}
    \vspace{-20pt}
\end{wrapfigure}
For L$^3$P, we evaluate long-horizon planning on PointMaze~\citep{duan2016benchmarking} and FetchPickAndPlace~\citep{plappert2018multi}. Integrating WorldGraph as a plug-in module provides graph-structured state and history information to the landmark planner while leaving the original environment, actor, and training objective unchanged. The resulting learning curves are shown in Figure~\ref{fig:l3p_fig4_combined}. Over the final ten test epochs, L$^3$P + WorldGraph leads L$^3$P by 7.11 percentage points on PointMaze (94.88\% vs. 87.77\%) and by 2.43 points on FetchPickAndPlace (99.18\% vs. 96.74\%).

\begin{wraptable}{r}{0.6\textwidth}
\vspace{-20pt}
    \centering
    \caption{\small \textbf{Comparison with C-SWM on Multi-Step Latent-State Ranking Tasks.} Higher H@1 and MRR are better.}
    \label{tab:rq4_cswm}
    \scriptsize
    \setlength{\tabcolsep}{5pt}
    \renewcommand{\arraystretch}{1.12}
    \begin{tabular}{ll|cc|cc|cc}
        \toprule
        \multirow{2}{*}{Env.} & \multirow{2}{*}{Model} & \multicolumn{2}{c|}{1 Step} & \multicolumn{2}{c|}{5 Steps} & \multicolumn{2}{c}{10 Steps} \\
        & & H@1 & MRR & H@1 & MRR & H@1 & MRR \\
        \midrule
        \multirow{2}{*}{Pong} & C-SWM & 35.00 & 51.45 & 11.33 & 27.24 & 6.67 & 19.03 \\
        & \textbf{+ WorldGraph} & \textbf{40.33} & \textbf{57.68} & \textbf{26.33} & \textbf{45.85} & \textbf{16.00} & \textbf{34.62} \\
        \midrule
        \multirow{2}{*}{SI} & C-SWM & 63.67 & 75.49 & 41.33 & 56.66 & 29.67 & 46.16 \\
        & \textbf{+ WorldGraph} & \textbf{68.00} & \textbf{79.68} & \textbf{62.67} & \textbf{76.04} & \textbf{61.00} & \textbf{75.71} \\
        \bottomrule
    \end{tabular}
    \caption{\small \textbf{Comparison with GWM-E on Node Classification (NC), Link Prediction (LP), and Graph Classification (GC) tasks (Accuracy/\%).} Higher accuracy is better.}
    \label{tab:rq4_gwme}
    \scriptsize
    \setlength{\tabcolsep}{7pt}
    \renewcommand{\arraystretch}{1.12}
    \begin{tabular}{lccccc}
        \toprule
        \textbf{Model} & \multicolumn{2}{c}{\textbf{Cora}} & \multicolumn{2}{c}{\textbf{PubMed}} & \textbf{HIV} \\
        & NC& LP & NC & LP & GC \\
        \midrule
        GWM-E & 83.03 & 94.31 & 84.22 & 94.01 & 93.86 \\
        \textbf{WorldGraph} & \textbf{88.93} & \textbf{95.17} & \textbf{89.34} & \textbf{97.04} & \textbf{96.49} \\
        \bottomrule
    \end{tabular}
    \vspace{-20pt}
\end{wraptable}
For C-SWM, we evaluate multi-step latent-state ranking on Pong~\citep{brockman2016openai} and Space Invaders (SI)~\citep{brockman2016openai} environments (env.) at 1, 5, and 10 steps. The plug-in augments C-SWM's object-centric transition model with graph-aware state and transition information, while retaining its original object encoder, transition backbone, and contrastive training objective. As shown in Table~\ref{tab:rq4_cswm}, WorldGraph improves both H@1 and MRR at every horizon on both games. The gains become larger at longer horizons, reaching 31.33 in H@1 and 29.55  in MRR at 10 steps on SI.

For comparison with GWM-E, we use its traditional graph prediction setting on Cora~\citep{mccallum2000automating}, PubMed~\citep{namata2012query}, and HIV~\citep{wu2018moleculenet}, covering node classification, link prediction, and graph classification. We independently train WorldGraph on these tasks with its multi-granularity graph encoder and task-specific prediction heads. WorldGraph achieves higher accuracy in all five columns, with absolute gains of 5.90 and 0.86 on Cora node classification and link prediction, 5.12 and 3.03 on the corresponding PubMed tasks, and 2.63 on HIV graph classification.

\section{Conclusion}
In this paper, we study \emph{graph world modeling} by formalizing latent world states and heterogeneous graph transitions from evolving graph observations.
We introduce \textit{GWM-Zero} to cover node-, edge-, and graph-level transition prediction across eight temporal graph datasets, and propose \textit{WorldGraph}, which uses a state-aware graph transformer for multi-granularity state modeling and a transition-aware GRPO with dynamic grouping and structure-aware verifiable rewards for learning under frequency imbalance.
Extensive experiments show that WorldGraph consistently achieves stronger performance across diverse \emph{multi-granularity graph transition prediction tasks}. WorldGraph provides a general and effective framework for graph-native world modeling and opens up promising directions for learning richer latent dynamics and more controllable graph evolution in future work.

\bibliography{iclr2027_conference}
\bibliographystyle{iclr2027_conference}

\newpage
\appendix
\tableofcontents
\newpage

\section{Notations}
\label{apsec:notations}
We provide the detailed notation in Table~\ref{tab:notation}.

\begin{table*}[!h]
\centering
\footnotesize
\caption{Core notations used in WorldGraph.}
\label{tab:notation}
\renewcommand{\arraystretch}{0.94}
\setlength{\tabcolsep}{4pt}
\begin{tabular}{p{0.27\linewidth}p{0.67\linewidth}}
\toprule
\textbf{Symbol} & \textbf{Description} \\
\midrule
\multicolumn{2}{l}{\bf \textit{Graph World}} \\
$t,\ \tau,\ T$ & Current time, historical time, and sequence length. \\
$g_t=(V_t,E_t,X_t)\in\mathcal{G}$ & Observed graph at time $t$, its node, edge, and property sets, and the graph space. \\
$\Delta g_t$ & Graph change associated with the transition from $g_t$ to $g_{t+1}$. \\
$\mathbf{s}_t\in\mathcal{S},\ f_s$ & Latent world state, world-state space, and state model. \\
$f_\Delta$ & Transition model that predicts $\Delta g_t$ from the current graph, the preceding graph change, and $\mathbf{s}_t$. \\
$\mathbf{z}_{v,t},\ \mathbf{z}_t,\ \mathbf{s}_{v,t}$ & Structure-aware node embedding, graph embedding, and node-level latent state at time $t$. \\
\addlinespace
\multicolumn{2}{l}{\bf\textit{State-Aware Graph Transformer}} \\
$L_{\max},\ \ell$ & Maximum hop number and hop index. \\
$\mathbf{x}_v^{(\ell)},\ \mathbf{x}_v^{\mathrm{path}_i}$ & Hop-level and $i$-th random-walk-path embeddings of node $v$. \\
$M,\ \mathrm{path}_i$ & Number of sampled random-walk paths and the $i$-th path. \\
$\mathbf{x}_{\Delta g_\tau},\ \mathrm{Type}(\Delta g_\tau)$ & Transition representation and learnable transition-type embedding for $\Delta g_\tau$. \\
$\mathbf{m}_{v,\tau},\ \mathrm{Dist}(t,\tau),\ t_\mathrm{max}$ & Historical memory of node $v$, relative temporal-distance embedding, and maximum historical time span. \\
$\mathbf{W}_x,\ \mathbf{W}_z,\ \mathbf{W}_a$ & Learnable projections of node history, graph history, and transition-type information in the node memory. \\
$\mathbf{q}_{v,t},\ \beta_{v,t,\tau},\ \tilde{\beta}_{v,t,\tau},\ d$ & History-attention query, attention score, normalized attention weight, and projected embedding dimension. \\
$\mathbf{p}_{t-\tau},\ \Pi(\Delta g_\tau,\Delta g_{t-1})$ & Relative-time attention bias and similarity between a historical transition and the latest observed transition. \\
$\mathbf{W}_Q,\ \mathbf{W}_K,\ \mathbf{W}_V$ & Learnable query, key, and value projections for history attention. \\
$\mathbf{h}_{v,t},\ \tilde{\mathbf{h}}_{v,t}$ & Single-head history embedding and the final multi-head history embedding of node $v$. \\
$g^{(\mathrm{SG})},\ g_i^{(\mathrm{OURS})}$ & An SGFormer subgraph and a WorldGraph sampled subgraph. \\
$g_i^{(\mathrm{IDEAL})},\ \zeta$ & An ideal full-coverage subgraph and the ideal coverage count. \\
$\mathbf{s}_{v,t}^{(\mathrm{SG})},\ \mathbf{s}_{v,t}^{(\mathrm{OURS})}$ & Final latent states induced by SGFormer and WorldGraph. \\
$\mathbf{s}_{v,t}^{(\mathrm{IDEAL})}$ & Final latent state induced by the ideal full-coverage encoder. \\
$\Delta_{\mathrm{SG}},\ \Delta_{\mathrm{OURS}}$ & Deviations of SGFormer and WorldGraph from the ideal latent state. \\
\addlinespace
\multicolumn{2}{l}{\textit{\bf Transition-Aware RL Algorithm}} \\
$\mathcal{J}_t,\ J_i,\ |J_i|$ & Set of transition-conditioned groups at time $t$, its $i$-th group, and the group size. \\
$\widehat{p}(J_i),\ \omega(J_i),\ \gamma$ & Historical transition frequency, rarity weight, and rarity-weight exponent of group $J_i$. \\
$\mathrm{Prob}(J_i),\ \mathcal{B},\ B_t(J_i)$ & Selection probability, total rollout budget, and budget allocated to group $J_i$. \\
$\rho^{\mathrm{GRPO}},\ \rho^{\mathrm{OURS}},\ \rho_i,\ C_a$ & Group sizes under standard GRPO and Transition-aware RL, a generic group size, and the base group-size constant. \\
$\mathrm{EVR}(\rho_i),\ M_{\mathrm{EVR}},\ \delta$ & Effective variance ratio, its residual term, and reward-distribution kurtosis. \\
\addlinespace
$\mathcal{I}_{\mathbf{s}}(v),\ \deg(v),\ \theta$ & Degree-based structural importance, node degree, and degree-scaling exponent. \\
$f_{<t}(v),\ \mathcal{I}_{\mathbf{v}}(v),\ \eta$ & Historical change frequency, variability importance, and variability-scaling exponent. \\
$\mathcal{I}(v),\ V_t$ & Normalized importance of node $v$ and the current node set over which it is normalized. \\
$\Delta\hat{Q},\ \Delta Q$ & Predicted and ground-truth change sets. \\
$\mathcal{P},\ \mathcal{R},\ R_{\mathrm{structure}}$ & Importance-weighted precision, recall, and their harmonic-mean structure reward. \\
$\lambda,\ R_{\mathrm{property}},\ R$ & Structural/property reward balancing coefficient, property-change reward, and unified task reward. \\
\bottomrule
\end{tabular}
\end{table*}
\newpage

\section{Additional Technical Details}
\subsection{Detailed Design of State-aware Graph Transformer}

{\subsubsection{Details of Multi-granularity Graph Encoder}
\textbf{Details of Hop-Level Message Passing.}
Given the maximum hop number $L_{\max}$, for a given graph $g_t$, we perform message passing for $\ell = \{1, 2, \cdots, L_{\max}\}$ over the neighbors of each node to obtain the initial node embeddings.
Let $\mathcal{N}(v)$ denote the neighbor set of node $v$. We compute the node embedding of $v$:
\begin{equation}
    \mathbf{x}_v^{(\ell)} = \operatorname{ReLU}\!\left(
    \mathbf{W}^{(\ell)}_\mathrm{self}\mathbf{x}_v^{(\ell-1)} +
    \frac{
    \sum_{u\in\mathcal{N}(v)} w_{uv}\,
    \mathbf{W}_{\mathrm{neighbor}}^{(\ell)}\mathbf{x}_u^{(\ell-1)}
    }{
    \max\!\left(1,\sum_{u\in\mathcal{N}(v)}w_{uv}\right)
    }
    \right)
\end{equation}
, where $w_{uv}$ is set to the currently observed edge weight for a weighted graph, and to $1$ for an unweighted graph, in which case the neighborhood term reduces to mean aggregation. $\mathbf{W}_\mathrm{self}^{(\ell)}$ and $\mathbf{W}_{\mathrm{neighbor}}^{(\ell)}$ denote the learnable weight matrices at layer $\ell$. Thus, we obtain node embeddings computed through multi-layer message passing based on neighbors at different hop distances and, when available, their observed relation strengths.

\textbf{Details of Random-Walk Path Sampling.}
We use projected scaled dot-product attention to aggregate the node embeddings along $\text{path}_i$. Specifically, for each node $v_j \in \text{path}_i$, we compute the attention score with respect to the starting node $v_0$ as
\begin{equation}
  \mathrm{Score}_{i,j}=
\frac{
\left(\mathbf{W}_{\mathrm{path}}\mathbf{x}_{v_j}^{(L_{\max})}\right)^\top
\left(\mathbf{W}_{\mathrm{path}}\mathbf{x}_{v_0}^{(L_{\max})}\right)
}{\sqrt{d}},  
\end{equation}
where $\mathbf{W}_{\mathrm{path}}$ is a learnable projection for path attention and $d$ is the dimension of the projected node embedding. We normalize the scores over the nodes on the same path: 
\begin{equation}
    \alpha_{i,j}=
\frac{\exp(\mathrm{Score}_{i,j})}
{\sum_{k=0}^{T_i}\exp(\mathrm{Score}_{i,k})}.
\end{equation}
The embedding corresponding to the $i$-th path is then obtained by weighted aggregation: 
\begin{equation}
    \mathbf{x}_v^{\text{path}_i}
=\sum_{j=0}^{T_i}\alpha_{i,j}\,\mathbf{x}_{v_j}^{(L_{\max})}.
\end{equation}

Thus, we derive \(\mathbf{x}_v^{\text{path}_i}\) for different sampled paths \(\text{path}_i\), capturing multi-step structural information around node \(v\) from different random-walk trajectories.

After obtaining the hop-level embeddings $\{\mathbf{x}_v^{(\ell)}\}_{\ell=1}^{L_{\max}}$ and the path-level embeddings $\{\mathbf{x}_v^{\mathrm{path}_i}\}_{i=1}^{M}$, we fuse all structural representations of node $v$ via an attention-based aggregation to obtain a structure-aware embedding $\mathbf{z}_{v,t}$.

\textbf{Details of Multi-Granularity Attention Fusion.}
Then, we collect all candidate representations into a unified set $\mathcal{S}_v=\big\{\mathbf{x}_v^{(1)},\ldots,\mathbf{x}_v^{(L_{\max})},\mathbf{x}_v^{\mathrm{path}_1},\ldots,\mathbf{x}_v^{\mathrm{path}_M}\big\}$.
Let $\{\mathbf{u}_{v,k}\}_{k=1}^{L_{\max}+M}$ denote the candidates in $\mathcal{S}_v$, where $\mathbf{u}_{v,k}=\mathbf{x}_v^{(k)}$ for $1\le k\le L_{\max}$ and $\mathbf{u}_{v,k}=\mathbf{x}_v^{\mathrm{path}_{k-L_{\max}}}$ for $L_{\max}< k\le L_{\max}+M~$.
We compute an unnormalized attention score between each candidate $\mathbf{u}_{v,k}$ and the last-hop embedding of node $v$ as
$\alpha_{v,k}=
\frac{
\left(\mathbf{W}_{\mathrm{fusion}}\mathbf{u}_{v,k}\right)^\top
\left(\mathbf{W}_{\mathrm{fusion}}\mathbf{x}_v^{(L_{\max})}\right)
}{\sqrt{d}}$,
where $\mathbf{W}_{\mathrm{fusion}}$ is a learnable projection for fusing the hop- and path-level representations. We then normalize the scores over all candidates via a softmax:
$\tilde{\alpha}_{v,k}=\frac{\exp(\alpha_{v,k})}{\sum_{k'=1}^{L_{\max}+M}\exp(\alpha_{v,k'})}$.
The final structure-aware embedding is retained as the node-level world state: 
\begin{equation}
\label{eq:worldstate}    
\mathbf{z}_{v,t}=\sum_{k=1}^{L_{\max}+M}\tilde{\alpha}_{v,k}\,\mathbf{u}_{v,k}.
\end{equation}

\subsubsection{Details of History-aware State Encoder}

\textbf{Details of Query Embedding.} For each node $v$, we construct a query embedding from its latest node-level world state, the global graph summary, the latest transition, and its previous latent state:
$\mathbf{q}_{v,t}
= [\mathbf{z}_{v,t};\,\mathbf{z}_t;\,
\mathbf{x}_{\Delta g_{t-1}};\,\mathbf{s}_{v,t-1}]$,
and aggregate its historical memory $\{\mathbf{m}_{v,\tau}\}$ with multi-head attention.

\textbf{Details of Multi-Head Attention} 
The same operation is performed in parallel by all attention heads. For one attention head, the history embedding is obtained by weighted aggregation of the value projections:

 \begin{equation}
 \mathbf{h}_{v,t}
 =\sum_{\tau}
 \widetilde{\beta}_{v,t,\tau}
 \left(\mathbf{m}_{v,\tau}\mathbf{W}_{V}\right).
\end{equation}
The outputs from all heads are concatenated and mixed by an MLP to obtain the final history embedding:
\begin{equation}
    \widetilde{\mathbf{h}}_{v,t}
    =\operatorname{MLP}\!\left(
    \big[\mathbf{h}_{v,t}^{(1)};\mathbf{h}_{v,t}^{(2)};\ldots;\mathbf{h}_{v,t}^{(H)}\big]
    \right).
\end{equation}

\subsection{Detailed Design of Transition-aware RL Algorithm}
\subsubsection{Details of Transition-Aware Dynamic Group Sampling}
\paragraph{Details of Rarity-based Rollout Budget Allocation}
Finally, to make earlier rare transitions receive a larger effective group size, we allocate the total rollout budget $\mathcal{B}$ across groups proportionally. 
We reserve the same minimum number of rollouts\footnote{In practice, we set the minimum number of rollouts as 2. We do this to ensure every transition group receives at least a minimal number of rollouts, preventing overly noisy return estimates from groups with too few samples.} for every group and use largest-remainder rounding for the remaining budget, such that $\sum_{j\in\mathcal{J}_t}B_t(j)=\mathcal{B}$, where $B_t(j)$ is the allocated number of rollouts for transition group $j$ at time step $t$.
Thus, groups corresponding to transitions that are rarer in the prefix $\{\Delta g_\tau\}_{\tau<t}$ are assigned more rollout samples at time $t$, increasing the relative training signal for these frequency-imbalanced modification patterns.

\subsubsection{Structure-aware Reward Design}

\textbf{Details of Node/Edge/Graph Importance Calculation.}
The three graph-world tasks produce outputs at different granularities. We therefore retain the same structural importance $ \mathcal{I}(v)$, while defining a verifiable reward consistent with the output and evaluation metric of each task. For a changed edge $e=(u,v)$, its importance is determined by its two nodes: $\mathcal{I}(e)=\frac{ \mathcal{I}(u)+ \mathcal{I}(v)}{2}$. For a changed local graph centered at node $v$, its importance is $\mathcal{I}(v)$.

Although the three graph-world tasks operate at different granularities, their rewards share the same principle: a prediction should identify the correct structural changes and estimate the corresponding representation changes. We therefore use the unified reward
\begin{equation}
R=\lambda R_{\mathrm{structure}}+(1-\lambda)R_{\mathrm{property}},
\end{equation}
where $R_{\mathrm{property}}$ is calculated based on the ground-truth updated property and the predicted property. $\lambda$ balances the two terms.

\subsection{Details of Task-specific Structure-aware Reward Calculation}
\label{sec:task_specific_reward_details}

\textbf{Node-level tasks.}
Let $V_{t+1}^{\mathrm{add}}$, $V_{t+1}^{\mathrm{delete}}$, and $\Delta V_{t+1}^{\mathrm{property}}$ denote the ground-truth sets of added, deleted, and property-changed nodes, respectively; their hatted versions denote the corresponding predictions. For a property-changed node $v$, $\Delta\mathbf{x}_{v,t+1}$ and $\widehat{\Delta\mathbf{x}}_{v,t+1}$ denote its ground-truth and predicted property changes. The importance-weighted property reward is
\begin{equation}
R_{\mathrm{property}}^{\mathrm{node}}=\exp\!\left(
-\frac{\sum_{v\in\Delta V_{t+1}^{\mathrm{property}}}\mathcal{I}(v)\,
\operatorname{MAE}\!\left(\widehat{\Delta\mathbf{x}}_{v,t+1},\Delta\mathbf{x}_{v,t+1}\right)}
{\sum_{v\in\Delta V_{t+1}^{\mathrm{property}}}\mathcal{I}(v)}
\right).
\end{equation}
Here, $R_{\mathrm{structure}}^{(\mathrm{node})}$ averages the set rewards for node addition, node deletion, and property-change localization. Setting $\lambda_{\mathrm{node}}=3/4$ in the unified reward gives equal weight to the four constituent terms:
\begin{equation}
\begin{aligned}
R_{\mathrm{node}}=\frac{1}{4}\big[&
R_{\mathrm{structure}}\!\left(\widehat{V}_{t+1}^{\mathrm{add}},V_{t+1}^{\mathrm{add}}\right)
+R_{\mathrm{structure}}\!\left(\widehat{V}_{t+1}^{\mathrm{delete}},V_{t+1}^{\mathrm{delete}}\right)\\
&+R_{\mathrm{structure}}\!\left(\widehat{\Delta V}_{t+1}^{\mathrm{property}},\Delta V_{t+1}^{\mathrm{property}}\right)
+R_{\mathrm{property}}^{\mathrm{node}}\big].
\end{aligned}
\end{equation}

\textbf{Edge-level tasks.}
Let $E_{t+1}^{\mathrm{add}}$, $E_{t+1}^{\mathrm{delete}}$, and $\Delta E_{t+1}^{\mathrm{property}}$ denote the ground-truth sets of added, deleted, and property-changed edges, respectively; their hatted versions denote the corresponding predictions. For a property-changed edge $e$, $\Delta\mathbf{x}_{e,t+1}$ and $\widehat{\Delta\mathbf{x}}_{e,t+1}$ denote its ground-truth and predicted property changes. The importance-weighted property reward is
\begin{equation}
R_{\mathrm{property}}^{\mathrm{edge}}=\exp\!\left(
-\frac{\sum_{e\in\Delta E_{t+1}^{\mathrm{property}}}\mathcal{I}(e)\,
\operatorname{MAE}\!\left(\widehat{\Delta\mathbf{x}}_{e,t+1},\Delta\mathbf{x}_{e,t+1}\right)}
{\sum_{e\in\Delta E_{t+1}^{\mathrm{property}}}\mathcal{I}(e)}
\right).
\end{equation}
Here, $R_{\mathrm{structure}}^{(\mathrm{edge})}$ averages the set rewards for edge addition, edge deletion, and property-change localization. With $\lambda_{\mathrm{edge}}=3/4$, the edge-level reward is
\begin{equation}
\begin{aligned}
R_{\mathrm{edge}}=\frac{1}{4}\big[&
R_{\mathrm{structure}}\!\left(\widehat{E}_{t+1}^{\mathrm{add}},E_{t+1}^{\mathrm{add}}\right)
+R_{\mathrm{structure}}\!\left(\widehat{E}_{t+1}^{\mathrm{delete}},E_{t+1}^{\mathrm{delete}}\right)\\
&+R_{\mathrm{structure}}\!\left(\widehat{\Delta E}_{t+1}^{\mathrm{property}},\Delta E_{t+1}^{\mathrm{property}}\right)
+R_{\mathrm{property}}^{\mathrm{edge}}\big].
\end{aligned}
\end{equation}

\textbf{Graph-level tasks.}
Let $\Delta V_{t+1}^{\mathrm{graph}}$ and $\widehat{\Delta V}_{t+1}^{\mathrm{graph}}$ denote the ground-truth and predicted sets of nodes whose local graphs change. The local-graph representation $\boldsymbol{\phi}_{v,t}$ contains node count, edge count, density, number of connected components, clustering coefficient, and triangle count. We use $\Delta\boldsymbol{\phi}_{v,t+1}=\boldsymbol{\phi}_{v,t+1}-\boldsymbol{\phi}_{v,t}$ and $\widehat{\Delta\boldsymbol{\phi}}_{v,t+1}=\widehat{\boldsymbol{\phi}}_{v,t+1}-\boldsymbol{\phi}_{v,t}$ for the ground-truth and predicted representation changes. When evaluating a sampled transition, the property-change magnitude is computed over its selected local-graph centres that also exhibit a ground-truth structural change. Their property reward is

\begin{equation}
R_{\mathrm{property}}^{\mathrm{graph}}=\exp\!\left(
-\frac{\sum_{v\in\Delta V_{t+1}^{\mathrm{graph}}}\mathcal{I}(v)\,
\operatorname{MAE}\!\left(\widehat{\Delta\boldsymbol{\phi}}_{v,t+1},\Delta\boldsymbol{\phi}_{v,t+1}\right)}
{\sum_{v\in\Delta V_{t+1}^{\mathrm{graph}}}\mathcal{I}(v)}
\right).
\end{equation}
With $\lambda_{\mathrm{graph}}=1/2$, the graph-level reward equally combines change localization and representation-change magnitude:
\begin{equation}
R_{\mathrm{graph}}=\frac{1}{2}\big[
R_{\mathrm{structure}}\!\left(\widehat{\Delta V}_{t+1}^{\mathrm{graph}},\Delta V_{t+1}^{\mathrm{graph}}\right)
+R_{\mathrm{property}}^{\mathrm{graph}}\big].
\end{equation}
In practice, we add a small number (e.g., $10^{-8}$) when computing the reward to avoid the numerical instability caused by a zero denominator.

\subsection{Pseudo-code of Transition-aware RL}
\label{sec:ap_transition_aware_rl}
Algorithm~\ref{alg:transition_aware_rl} summarizes the training and inference procedure of the proposed transition-aware reinforcement learning algorithm. At each training step, the input is the current graph $g_t$, its observed historical transitions and latent states, and the next graph $g_{t+1}$ used only as supervision. The state model first encodes $g_t$ and its history into $\mathbf{s}_t$. From transitions observed before $t$, we then construct transition-conditioned groups $\mathcal{J}_t$, with one group corresponding to each transition type observed in the history. We estimate the historical frequency $\widehat{p}_t(J_i)$ of each group, convert it to the rarity weight $\omega_t(J_i)=\widehat{p}_t(J_i)^{-\gamma}$, and normalize these weights to obtain the group allocation probabilities. Every group receives the same base number $C_a$ of rollouts, while the remaining budget is allocated according to these probabilities, giving relatively more samples to transition types that are less frequently observed.

For each allocated rollout, the first operation is constrained to the corresponding group; for example, in a node-level task, one group may start with node addition, another with node deletion, and another with a node feature change. The Controller then samples subsequent valid operations and target nodes or edges until it stops. The resulting operation sequence is decoded as a predicted change set $\Delta\hat{Q}$ (together with the task-specific representation change). Comparing it with the ground-truth change set $\Delta Q$ and representation change from $g_{t+1}$ gives the structure reward $R_{\mathrm{structure}}$ and property reward $R_{\mathrm{property}}$, which are combined into the scalar return $R=\lambda R_{\mathrm{structure}}+(1-\lambda)R_{\mathrm{property}}$. We use GRPO to compare these returns only among rollouts that began with the same edit type. Its core idea is to avoid fitting a separate value or critic network: rewards are normalized within each transition-conditioned group to form group-relative advantages. A rollout with an above-group-average return receives a positive advantage, whereas a below-average rollout receives a negative one; the clipped policy-ratio objective then increases the probability of the former and decreases the probability of the latter. At inference time, the trained Controller supplies the transition conditioning and the task-specific prediction heads decode the final output from the predicted future latent state.

\begin{algorithm}[!t]
\caption{Transition-aware RL Training and Inference}
\label{alg:transition_aware_rl}
\begin{algorithmic}[1]
\Require Chronological graph sequence $\{g_t\}_{t=1}^{T}$, state model $f_s$, transition Controller $f_\Delta$, rollout budget $\mathcal{B}$, base group size $C_a$, exponents $\gamma,\theta,\eta$, and reward coefficient $\lambda$
\Ensure Trained WorldGraph model

\State Initialize the state model, transition Controller, and task-specific prediction heads

\For{$t=1,2,\ldots,T-1$}
    \State Encode the current graph and its history to obtain $\mathbf{s}_t=f_s(g_t,\Delta g_{t-1},\mathbf{s}_{t-1})$
    \State Compute the degree-based and historical-variability importance terms and obtain $\mathcal{I}(v)$ using $\theta$ and $\eta$
    \State Construct the transition-conditioned groups $\mathcal{J}_t$ from transitions observed before time $t$
    \For{each group $J_i\in\mathcal{J}_t$}
        \State Estimate its historical frequency $\widehat{p}(J_i)$ and compute $\omega(J_i)=\widehat{p}(J_i)^{-\gamma}$
    \EndFor
    \State Compute $\mathrm{Prob}(J_i)$ by normalizing the rarity weights over $\mathcal{J}_t$
    \State Allocate $B_t(J_i)$ rollouts to each group, reserving $C_a$ rollouts per group and distributing the remaining budget according to $\mathrm{Prob}(J_i)$
    \For{each group $J_i\in\mathcal{J}_t$}
        \For{$b=1,2,\ldots,B_t(J_i)$}
            \State Force the first operation to belong to $J_i$
            \State Sample subsequent valid operations and target nodes or edges from $f_\Delta$ until the stopping decision
            \State Decode the sampled operation sequence as a predicted change set $\Delta\hat{Q}$
            \State Compare $\Delta\hat{Q}$ with the ground-truth change set $\Delta Q$ and compute $R_{\mathrm{structure}}$
            \State Compute the task-specific property reward $R_{\mathrm{property}}$
            \State Compute $R=\lambda R_{\mathrm{structure}}+(1-\lambda)R_{\mathrm{property}}$
        \EndFor
    \EndFor
    \State Normalize the rewards within each transition-conditioned group to obtain group-relative advantages
    \State Update $f_\Delta$ with the clipped GRPO objective using all sampled rollouts and their group-relative advantages
\EndFor

\State At inference time, use the trained Controller to provide the transition conditioning, and use the task-specific prediction heads to decode the final task output from the predicted future latent state
\State \Return the trained WorldGraph model
\end{algorithmic}
\end{algorithm}

\clearpage

\section{Proof of Theorems}  
\label{sec:ap-c}
\subsection{Proof of Theorem~\ref{thm:graphtransformer}}
\label{sec:ap-c-thm1}
\begin{proof}
From Equation~\ref{eq:finalstate}, the final latent state can be written as
\begin{equation}
\mathbf{s}_{v,t}^{(\mathrm{method})} = \mathcal{F}\!\left(\mathbf{z}_{v,t}^{(\mathrm{method})}\right).
\end{equation}

$\mathcal{F}$ is a deterministic function  induced by the structure and history information with the same $\Delta g_{t-1}$, $\mathbf{s}_{v, t-1}$\footnote{Since SGFormer is designed for static graph scenarios, we use the same $\mathbf{s}_{t-1}$ to mitigate the effects brought by the quality of temporal encodings, thereby allowing the model to focus more on graph-structure encoding.}, and model parameters.

Following existing theoretical analysis of graph learning methods~\citep{HuangL0YZJ024,maskey2026graph,JiaZV24}, we consider that the message function $\mathcal{F}(\cdot)$ is $C$-Lipschitz.
Then the deviation between a method and the ideal model is bounded by the induced deviation of the aggregated inputs:
\[
\Delta_{\mathrm{method}} = 
\left\|\mathbf{s}_{v,t}^{(\mathrm{method})}-\mathbf{s}_{v,t}^{(\mathrm{IDEAL})}\right\|_2
\le C_L\cdot \left\|\mathbf{z}_{v,t}^{(\mathrm{method})}-\mathbf{z}_{v,t}^{(\mathrm{IDEAL})}\right\|_2
\]
where $C_L$ is a constant.

Based on Equation~\ref{eq:worldstate}, we can model $\mathbf{z}_{v,t}^{(\mathrm{method})} = \sum_{i}\psi(g_i)$. In our state-aware graph transformer (SGT), $\psi(g_i) = \tilde{\alpha}_i\mathbf{u}_{v,k}$, as defined in Equation~\ref{eq:worldstate}. 
To avoid representation drift of the same subgraph instance induced by different hyperparameters in attention designs (e.g., the number of attention heads),
we use a consistent attention encoder across SGFormer, SGT (ours), and the ideal model.
Consequently, for any training subgraph $g_i$,
$\psi_{\mathrm{SGFormer}}(g_i)=\psi_{\mathrm{OURS}}(g_i)=\psi_{\mathrm{IDEAL}}(g_i)=\psi(g_i)$.
Following~\citep{wan2022bns}, we can bound $\psi(g_i)$ by a constant $C_p$. $\tilde{\alpha}_i$ is bounded by $1$ because it is exponentially normalized.

Therefore, $\left\|\mathbf{z}^{(\mathrm{method})}-\mathbf{z}^{(\mathrm{IDEAL})}\right\|_2 = \|\sum_{g_i \notin g^{(\mathrm{method})}}\psi(g_i)\|_2 \leq \|\sum_{g_i \notin g^{(\mathrm{method})}}C_p\|_2$. 
Then, 
the upper bound of $\Delta_{\mathrm{OURS}}=\left\|\mathbf{z}^{(\mathrm{OURS})}-\mathbf{z}^{(\mathrm{IDEAL})}\right\|_2$ is $(\zeta - (L_{\max} + M))\|C_p\|_2$. The upper bound of $\Delta_{\mathrm{SGFormer}}=\left\|\mathbf{z}^{(\mathrm{SGFormer})}-\mathbf{z}^{(\mathrm{IDEAL})}\right\|_2$ is $(\zeta - 1)\|C_p\|_2 > (\zeta - (L_{\max} + M))\|C_p\|_2$.
Therefore, the upper bound of \(\Delta_{\mathrm{OURS}}\) is smaller than that of \(\Delta_{\mathrm{SG}}\).

\end{proof}

\subsection{Proof of Theorem~\ref{thm:rl}}
\label{sec:ap-c-thm2}
\begin{proof}
Because GRPO uses a fixed group size, the group size corresponding to each transition type is a constant \(C_a\). Hence, 
\begin{equation}
\rho^{\mathrm{GRPO}} = C_a.    
\end{equation}
Our RL algorithm uses a dynamic group size based on the transition frequencies. As discussed in Footnote~\ref{ft:minmumrollout}, we assign a minimum number of rollouts to different transition types. In this case, each transition type corresponds to the same group size as in GRPO. Then, based on the transition frequency, we allocate additional rollout budget to transitions with larger/minimum counts (i.e., larger group sizes) for different transition types, which leads to
\begin{equation}
\rho^{\mathrm{OURS}} = C_a + \mathrm{Prob}(J_i)\cdot \mathcal{B} > \rho^{\mathrm{GRPO}} = C_a.
\end{equation}

From~\citep{kim2026efficiency}, $\mathrm{EVR}(\rho)=1+\frac{\delta-1}{2\rho}+M_\mathrm{EVR}$.
Following~\citep{kim2026efficiency}, we consider $\delta>1$. Since $\mathrm{EVR}(\rho)$ is monotonically decreasing with respect to $\rho$, 
\begin{equation}
    \mathrm{EVR}(\rho^{\mathrm{OURS}}) < \mathrm{EVR}(\rho^{\mathrm{GRPO}}).
\end{equation}
This completes the proof.

\end{proof}
\newpage

\section{Experimental Validation of Theoretical Results}
\subsection{Validation of the Graph-Encoder Bound}
\label{sec:ap-c-thm1-exp}
To empirically verify the ordering predicted by Theorem~\ref{thm:graphtransformer}, we use five frozen WorldGraph checkpoints on Graph-level tasks/Flights and evaluate the same test transitions without further training. We construct a full-view reference representation using an expanded deterministic structural-view set, and compare it with (i) SGFormer, which retains only one structural view, and (ii) the default multi-view SGT encoder. We report the mean node-wise Euclidean distances to the reference representation and the corresponding hidden-state distances. As shown in Table~\ref{tab:sgt_theorem_verification}, the default multi-view SGT has substantially smaller deviations than SGFormer for both $\mathbf{z}_{v,t}$ and $\mathbf{s}_{v,t}$, consistent with the smaller deviation bound established in Appendix~\ref{sec:ap-c-thm1}.

\begin{table}[!t]
    \centering
    \small
    \caption{Experimental validation of Theorem~\ref{thm:graphtransformer} on Graph-level tasks/Flights. Values are mean $\pm$ standard deviation over five checkpoints; lower is better.}
    \label{tab:sgt_theorem_verification}
    \setlength{\tabcolsep}{6pt}
    \renewcommand{\arraystretch}{1.08}
    \begin{tabular}{lcc}
        \toprule
        Encoder setting & $\|\mathbf{z}_{v,t}-\mathbf{z}_{v,t}^{(\mathrm{IDEAL})}\|_2$ & $\|\mathbf{s}_{v,t}-\mathbf{s}_{v,t}^{(\mathrm{IDEAL})}\|_2$ \\
        \midrule
        SGFormer (Single-view) & $11.2743\pm0.3400$ & $6.2460\pm0.9255$ \\
        \textbf{WorldGraph (Multi-view SGT)} & \textbf{3.8210} $\pm$ \textbf{0.5362} & \textbf{1.5568} $\pm$ \textbf{0.3691} \\
        \bottomrule
    \end{tabular}
\end{table}

\subsection{Validation of the RL Variance Bound}
\label{sec:ap-c-thm2-exp}
To empirically validate the ordering predicted by Theorem~\ref{thm:rl}, we conduct a controlled experiment on Node-level tasks/TGBN-Trade. Standard GRPO uses a fixed base group size $C_a$ for each transition type. Transition-aware RL retains the same base group size, then allocates the remaining rollout slots according to the estimated transition rarity until the configured total budget is exhausted. We report the first-order EVR approximation $1+(\delta-1)/(2\rho)$. As shown in Table~\ref{tab:rl_theorem_verification}, Transition-aware RL has a $13.02\%$ lower mean EVR than standard GRPO, consistent with the ordering established in Appendix~\ref{sec:ap-c-thm2}.

\begin{table}[!t]
    \centering
    \small
    \caption{Experimental validation of Theorem~\ref{thm:rl} on Node-level tasks/TGBN-Trade. Mean EVR is the arithmetic mean over the three transition groups; lower is better.}
    \label{tab:rl_theorem_verification}
    \setlength{\tabcolsep}{5pt}
    \renewcommand{\arraystretch}{1.08}
    \begin{tabular}{lc}
        \toprule
        Method & Mean EVR \\
        \midrule
        Standard GRPO & $1.5550$ \\
        \textbf{Transition-aware RL} & \textbf{1.3526} \\
        \bottomrule
    \end{tabular}
\end{table}
\newpage
\section{Time and Space Complexity Analysis of WorldGraph.}
\label{sec:B2}

Let $N=|V_t|$ and $E=|E_t|$ denote the numbers of nodes and edges in the current graph, respectively, and let $d$ be the maximum hidden dimension. We further use $L_{\max}$ for the number of message-passing hops, $M$ for the number of random walks per node, $T_{\mathrm{rw}}$ for their maximum length, $L_{\mathrm{hist}}$ for the history-window length, $C_t$ for the number of legal transition candidates at time $t$, and $\mathcal{B}$ for the total rollout budget. The time complexity of WorldGraph consists of the following three parts.

\subsection{Multi-Granularity Graph Encoding.}
The $L_{\max}$ hop-level message-passing layers require $O\!\left(L_{\max}(Ed+Nd^2)\right)$ time. Sampling $M$ walks of length at most $T_{\mathrm{rw}}$ from every node costs $O(NMT_{\mathrm{rw}})$, while path encoding and attention fusion require $O\!\left(N(MT_{\mathrm{rw}}+L_{\max}+M)d^2\right)$. Therefore, the total time complexity of this component is
\begin{equation}
O\!\left(L_{\max}Ed+N(L_{\max}+MT_{\mathrm{rw}}+M)d^2\right).
\end{equation}

\subsection{History-Aware State Encoding.}
For each node, the state encoder uses one current query to attend to at most $L_{\mathrm{hist}}$ historical memories. The query, key, value, and feed-forward projections require $O(NL_{\mathrm{hist}}d^2)$ time, while computing and aggregating the attention weights requires $O(NL_{\mathrm{hist}}d)$ time. Hence, this component has complexity $O(NL_{\mathrm{hist}}d^2)$.

\subsection{Transition-Aware RL.}
Encoding the node states and scoring the legal node or edge candidates costs $O\!\left((N+C_t)d^2\right)$. Dynamic group statistics and rollout allocation depend only on the small set of transition groups and are negligible compared with candidate scoring. The encoded graph state and Controller logits are shared across the $\mathcal{B}$ rollouts; batched transition-conditioned state updates and structure-aware reward computation require at most $O\!\left(\mathcal{B}(Nd^2+C_t+N+E)\right)$ time. For node- and graph-level tasks, $C_t$ is typically linear in $N$. For edge-level tasks, $C_t$ may reach $O(N^2)$ when all legal node pairs are considered, in which case candidate construction and scoring dominate the computation.

Combining the three components, the per-transition training complexity is
\begin{equation}
O\!\left(
L_{\max}Ed
+N(L_{\max}+MT_{\mathrm{rw}}+M+L_{\mathrm{hist}})d^2
+C_td^2
+\mathcal{B}(Nd^2+C_t+N+E)
\right).
\end{equation}
Since $L_{\max}$, $M$, $T_{\mathrm{rw}}$, $L_{\mathrm{hist}}$, and $\mathcal{B}$ are bounded hyperparameters, WorldGraph scales linearly with the observed graph size apart from the task-dependent transition-candidate set. Its space complexity is 
\begin{equation}
O\!\left(E+N(L_{\max}+MT_{\mathrm{rw}}+L_{\mathrm{hist}})d+C_td+\mathcal{B}(Nd+C_t)\right),   
\end{equation}
accounting for graph storage, hop/path representations, historical memories, candidate states, and rollout tensors.
}
\newpage

\section{Additional Datasets and Baselines Information}

\subsection{Statistics of Datasets}
\label{sec:C1}

We evaluate GWM-Zero on eight real-world dynamic graph datasets. Their statistics are summarized in Table~\ref{tab:dataset_stat}. Contact interactions are aggregated into causal six-hour snapshots, SocialEvo interactions into daily snapshots, and Enron interactions into weekly snapshots; the remaining datasets retain their released temporal granularity. We adopt the official TGB splits and chronological 70\%/15\%/15\% splits otherwise. All preprocessing statistics and thresholds are fitted using the training split only, and predictions use only the observed graph history.

\begin{table*}[!h]
    \centering
    \caption{Statistics of the eight real-world datasets used in GWM-Zero. The numbers of temporal events and snapshots are computed from the processed artifacts used by all methods.}
    \label{tab:dataset_stat}
    \scriptsize
    \resizebox{\textwidth}{!}{%
    \begin{tabular}{llrrrlll}
        \toprule
        Dataset & Domain & Nodes & Temporal Events & Snapshots & Edge Input & Snapshot Interval & Task(s) \\
        \midrule
        TGBN-Trade  & Economic interaction & 255    & 468{,}245    & 31    & Weighted & Annual          & Node-Level Tasks, Edge-Level Tasks \\
        TGBN-Genre  & User preference      & 1{,}505  & 17{,}858{,}395 & 1{,}580 & Weighted & Daily           & Node-Level Tasks \\
        TGBN-Reddit & Social interaction   & 11{,}766 & 27{,}174{,}118 & 1{,}090 & Binary   & Daily           & Node-Level Tasks \\
        UN Vote     & Political interaction& 201    & 1{,}035{,}742 & 72    & Weighted & Annual          & Edge-Level Tasks \\
        Contact     & Physical proximity   & 692    & 2{,}426{,}279 & 112   & Binary   & Six hours       & Edge-Level Tasks, Graph-Level Tasks \\
        SocialEvo   & Social proximity     & 74     & 2{,}098{,}119 & 243   & Binary   & Daily           & Edge-Level Tasks \\
        Flights     & Transportation       & 13{,}169 & 1{,}811{,}118 & 41    & Binary   & Monthly & Graph-Level Tasks \\
        Enron       & Communication        & 184    & 108{,}825    & 183   & Binary   & Weekly          & Graph-Level Tasks \\
        \bottomrule
    \end{tabular}}
\end{table*}

\begin{itemize}[left=-1pt]
    \item \textbf{TGBN-Trade}~\citep{huang2023temporal} is a directed international trade network. Nodes denote countries and weighted temporal edges record trade relations. Its official dynamic country-property vectors support the node-level task, while its annual topology is also used for edge-level prediction.

    \item \textbf{TGBN-Genre}~\citep{huang2023temporal} is a weighted bipartite interaction graph between users and music-genre coordinates. The official daily genre-preference vectors are retained without target remapping or normalization and are used for node-level evolution prediction.

    \item \textbf{TGBN-Reddit}~\citep{huang2023temporal} is a large bipartite interaction graph between entities and subreddit coordinates. We use binary graph connectivity and retain the official 698-dimensional dynamic property vectors for the node-level task.

    \item \textbf{UN Vote}~\citep{poursafaei2022towards} is an annual country interaction network. A directed weighted edge represents the number of co-yes voting relations between a country pair within an annual interval. It is used for edge addition and removal prediction.

    \item \textbf{Contact}~\citep{poursafaei2022towards} records physical proximity among students. We aggregate the released events into causal six-hour binary snapshots and use the resulting sequence for both edge-level and graph-level evaluation.

    \item \textbf{SocialEvo}~\citep{poursafaei2022towards} records temporal social-proximity interactions. After removing self-loops, we aggregate interactions into daily binary snapshots and use them for edge-level evaluation.

    \item \textbf{Flights}~\citep{poursafaei2022towards} is a directed airport network in which an edge denotes an observed flight connection. For graph-level evaluation, we use the released native temporal snapshots and predict the next snapshot's local structural changes.

    \item \textbf{Enron}~\citep{poursafaei2022towards} is a directed employee email network. We remove self-loops and aggregate the interactions into weekly binary snapshots for graph-level structural-evolution prediction.
\end{itemize}

\subsection{Statistics of Baselines}
\label{sec:C2}

We compare WorldGraph with eleven representative baselines covering graph encoders, graph Transformers, temporal graph models, pretrained graph models, and graph world models. Every method receives the same current graph, released history, data split, and task targets. For compatibility with the graph-world tasks, each baseline is equipped with supervised transition and count-prediction heads, while its original representation backbone is preserved.

\begin{itemize}[left=-1pt]
    \item \textbf{MPNN.}
    \begin{itemize}
        \item \textbf{GCN}~\citep{kipf2017semi} performs normalized neighborhood aggregation through graph convolution.
        \item \textbf{GAT}~\citep{velickovic2018graph} assigns learned attention weights to neighboring nodes during message passing.
        \item \textbf{GraphSAGE}~\citep{hamilton2017inductive} constructs node representations using sampled-neighborhood aggregation.
    \end{itemize}

    \item \textbf{Graph Transformers.}
    \begin{itemize}
        \item \textbf{SGFormer}~\citep{sgformer} combines simplified global self-attention with graph propagation for scalable graph representation learning.
        \item \textbf{NodeFormer}~\citep{wu2022nodeformer} approximates all-pair node attention with kernelized message passing.
        \item \textbf{GraphGPS}~\citep{RampasekGDLWB22} combines a local message-passing module with a global Transformer module.
    \end{itemize}

    \item \textbf{Temporal graph models.}
    \begin{itemize}
        \item \textbf{TGN}~\citep{rossi2020temporal} maintains node-wise memories that are updated by time-stamped interactions and uses them for temporal prediction.
        \item \textbf{TIDFormer}~\citep{peng2025tidformer} models temporal intervals and evolving dependencies through interval-aware attention.
    \end{itemize}

    \item \textbf{Pretrained graph models.}
    \begin{itemize}
        \item \textbf{Graph-JEPA}~\citep{skenderi2023graphjepa} improves the structured representation of a graph world model through train-only self-supervised latent prediction on graph patches, followed by fine-tuning of its encoder and task adapter.
        \item \textbf{MDGFM}~\citep{wang2025mdgfm} transfers a leave-one-domain-out pretrained encoder that aligns topology across source domains before downstream adaptation.
    \end{itemize}

    \item \textbf{Graph world model}\footnote{For L$^3$P and C-SWM, we preserve their original task-specific transition and decoder, and add WorldGraph's multi-granularity graph encoder, history-aware state encoder, Controller, transition-aware dynamic group sampling, and structure-aware rewards. GWM-E does not model transition-conditioned temporal state transitions, so we insert only our multi-granularity graph encoder as a fusion adapter over its multi-hop graph tokens and retain its original LLM decoder.}.
    \begin{itemize}
        \item \textbf{GWM-E}~\citep{FengWLY25}\footnote{\citet{FengWLY25} propose GWM-E and GWM-T. We compare our method with GWM-E, since it is publicly available and is reported as the stronger variant, whereas GWM-T is not publicly accessible.} adopts the publicly released embedding-based GWM (GWM-E). It repeatedly propagates node properties over the graph to obtain multi-hop embeddings, maps the embedding at each hop into graph tokens through hop-specific projectors, and inserts these tokens into the task prompt so that a frozen LLaMA decoder can produce the final prediction.
        \item \textbf{L$^3$P}~\citep{Zhang0S21a} identifies landmark states in the environment and constructs a reachability graph over them. It performs long-horizon planning by composing transitions between these landmarks, reducing the accumulation of errors from step-by-step prediction.
        \item \textbf{C-SWM}~\citep{KipfPW20} encodes visual observations into object-centric latent nodes, uses a graph neural network to model their interactions and transitions, and learns the latent dynamics with a contrastive objective.
    \end{itemize}
\end{itemize}

\subsection{Why We Choose These Datasets and Baselines}
\label{sec:C3}

The selected datasets cover economic, preference, social, political, proximity, transportation, and communication networks. They vary substantially in scale, temporal granularity, graph type, and edge semantics, ranging from 74 to 13{,}169 nodes and from approximately $10^5$ to $2.7\times10^7$ temporal events. Moreover, the same collection supports complementary node-, edge-, and graph-level predictions, allowing us to evaluate whether a graph world model can consistently capture evolution at different structural granularities rather than overfitting to one task type.

The baselines provide complementary comparisons. GCN, GAT, and GraphSAGE test conventional local message passing; SGFormer, NodeFormer, and GraphGPS test global graph attention; TGN and TIDFormer test dedicated temporal modeling; Graph-JEPA and MDGFM test whether graph pretraining alone is sufficient; L$^3$P and C-SWM test graph-structured planning and object-centric visual dynamics, respectively; and GWM-E tests graph-token conditioning of an LLM for traditional graph prediction tasks. This selection therefore separates the contributions of graph encoding, temporal state modeling, pretraining, graph-structured world modeling, graph-token task conditioning, and transition-aware reinforcement learning.
\newpage

\section{Experimental details}
\subsection{Details of WorldGraph Pretraining}
\label{sec:worldgraph_pretraining}
WorldGraph-Pretrain adopts same-task leave-one-dataset-out pretraining: for each target dataset, we exclude it and use chronological transitions from the remaining datasets under the same node-, edge-, or graph-level task. Given a current graph $g_t$, we construct two views through property masking and independent random-walk sampling. Representations of the same node $v$ in the two views form a positive pair, while representations of different sampled nodes serve as negatives; we optimize these pairs with a symmetric InfoNCE loss. We minimize the cosine distance between the two views' corresponding graph embeddings $\mathbf{z}_t$ and between their corresponding node-level latent states $\mathbf{s}_{v,t}$, and align the predicted next-step node representations with those encoded from $g_{t+1}$. The resulting multi-granularity graph encoder, history-aware state encoder, and latent predictor initialize the complete WorldGraph model for downstream training.

\subsection{Details of Task Information}
\label{expap:task}
\textbf{(1) Node Level Tasks.} We evaluate node addition, node deletion, and node property change on TGBN-Trade, TGBN-Genre, and TGBN-Reddit. 
{Each released snapshot defines one discrete time step. A node is naturally added when it has no incident edge at the current step but has one at the next, and naturally removed in the reverse case. Because natural changes are sparse, we supplement them with shared nodes whose historical interaction activity is rising (for addition) or declining (for deletion), masking the node, its incident edges, and its property at the corresponding time step.}
Properties are given by the official dynamic node-property labels. For the nodes observed at both adjacent steps, we compute their Jaccard distance between the Top-10 positive property dimensions. We then sort these distances in ascending order and use the value at the 60th percentile as the threshold. If a node’s Jaccard distance across adjacent steps exceeds this threshold, we mark it as indicating a property change.
We report F1 score for addition, deletion, and property change, and NDCG@10 for future property prediction on changed nodes.

\textbf{(2) Edge Level Tasks.} We evaluate edge addition and deletion on TGBN-Trade, UN Vote, Contact, and SocialEvo. Given the current graph, the model predicts which disconnected node pairs will form edges and which existing edges will disappear, and then reconstructs the next graph. We report F1 score for edge addition and deletion and their macro-F1 score.

\textbf{(3) Graph Level Tasks.} We evaluate {subgraph-level structural property changes} on Flights, Contact, and Enron datasets. 
{We use all nodes incident to at least one edge in the current graph as center nodes. For each selected node, we construct its local subgraph in both the current and next graphs, and predict the change in its six-dimensional structural property vector, consisting of node count, edge count, density, number of connected components, clustering coefficient, and triangle count.}
We report MAE and RMSE for these changes and macro-F1 for classifying them as decreasing, unchanged, or increasing.

\subsection{Details of Plug-in Integration and Traditional Graph Prediction}
\label{sec:ap_details_plugin_and_traditional}
As discussed in Section~\ref{sec:Comparison on Existing GWM Problem Settings}, we evaluate WorldGraph in the following two settings. For L$^3$P and C-SWM, we preserve each model's original backbone and task-specific decoder while adding the multi-granularity graph encoder, history-aware state encoder, Controller, transition-aware dynamic group sampling, and structure-aware reward. Separately, we compare standalone WorldGraph with GWM-E on the latter's traditional graph prediction tasks, where GWM-E values are taken from the original reported results.

\subsection{Details of Hardware Information}
The main experiments are conducted on a server equipped with an Intel Xeon E5-2698 v4 CPU at 2.20 GHz (40 logical CPU cores), 256 GB RAM, and four NVIDIA Tesla V100 GPUs with 16 GB memory each. Each training process uses one GPU.

\subsection{Details of Parameter setup.}
All methods use the same chronological splits, graph inputs, history, and task targets. WorldGraph is optimized with AdamW using an eight-state history window. Checkpoints are selected on the validation set, and results are averaged over five runs. The main settings are summarized in Table~\ref{tab:hyperparams}.

\begin{table}[!t]
    \centering
    \caption{Principal hyperparameters used by WorldGraph.}
    \label{tab:hyperparams}
    \small
    \begin{tabular}{lccc}
        \toprule
        Hyperparameter & Node-level tasks & Edge-level tasks & Graph-level tasks \\
        \midrule
        Latent-state dimension & 64 & 32 & 64 \\
        Hidden-state dimension & 64 & 32 & 64 \\
        Transition-representation dimension & 32 & 16 & 32 \\
        Maximum historical time span & 8 & 8 & 8 \\
        Dropout rate & 0.1 & 0.1 & 0.1 \\
        Learning rate & $10^{-3}$ & $10^{-3}$ & $10^{-3}$ \\
        Total rollout budget & 12 & 16 & 12 \\
        \bottomrule
    \end{tabular}
\end{table}

All graph-state transitions are processed chronologically without access to future information, and every method uses the same task construction and evaluation protocol.
\newpage

{\section{More Experiments}}
\subsection{Efficiency Analysis of WorldGraph.}
We compare the mean epoch time, computed by averaging the first five training epochs, for WorldGraph and the eight baseline models on TGBN-Genre. The last two rows in Table~\ref{tab:genre_efficiency} report only the pure reinforcement-learning overhead for standard GRPO and Transition-aware RL, respectively. WorldGraph w/o RL is somewhat slower than most representation and temporal baselines, but it is not the slowest; given the substantially stronger predictive performance reported in Table~\ref{tab:worldgraph_results}, this moderate computational cost is acceptable. For the RL comparison, Transition-aware RL incurs only $1.99$~s more per epoch than standard GRPO ($27.54$~s compared with $25.55$~s), indicating that the two key components of our RL design---transition-aware dynamic group sampling and structure-aware reward---introduce little additional runtime. Overall, these results show that WorldGraph provides a reasonable overall efficiency while delivering stronger predictive performance.

\begin{table}[!h]
    \centering
    \caption{Mean training time per epoch (seconds) on TGBN-Genre, computed over the first five training epochs.}
    \label{tab:genre_efficiency}
    \small
    \setlength{\tabcolsep}{8pt}
    \renewcommand{\arraystretch}{1.08}
    \begin{tabular}{lr}
        \toprule
        Model & Time (s/epoch) \\
        \midrule
        GCN & 47.26 \\
        GAT & 51.47 \\
        GraphSAGE & 43.49 \\
        SGFormer & 53.45 \\
        NodeFormer & 72.11 \\
        GraphGPS & 64.04 \\
        TGN & 44.50 \\
        TIDFormer & 55.10 \\
        \textbf{WorldGraph w/o RL} & 66.89 \\
        \midrule
        Standard GRPO & 25.55 \\
        \textbf{Transition-aware RL} & 27.54 \\
        \bottomrule
    \end{tabular}
\end{table}

\subsection{Multi-step Prediction Analysis.}
\label{sec:multistep_graph_prediction}

We evaluate multi-step prediction on the \emph{Graph-level tasks}, where the model forecasts changes in the structural property vector of local subgraphs. Each model is first trained with the standard one-step Graph-level protocol. At evaluation time, it observes the graph at an initial time, predicts the next graph-state change, and then recursively feeds its own predicted graph state into the same state-transition and prediction modules for steps $2$, $3$, and $5$; ground-truth future graphs are used only to construct the evaluation targets. Following the Graph-level evaluation protocol, we report MAE and RMSE for these structural-property changes and Macro-F1 for classifying them as decreasing, unchanged, or increasing. Table~\ref{tab:multistep_graph} summarizes the results. WorldGraph-Pretrain is the best method on all nine Contact and Enron metrics, and on eight of the nine Flights metrics. Its Macro-F1 improvement over the strongest external baseline is $1.78\%/3.90\%/4.12\%$ on Flights, $6.36\%/6.19\%/7.60\%$ on Contact, and $1.10\%/1.45\%/1.28\%$ on Enron for Steps $2/3/5$, respectively. At Step 5, it also reduces MAE/RMSE by $0.4075/0.5832$ on Flights, $0.3861/0.5506$ on Contact, and $0.3789/0.4997$ on Enron relative to the best external values. These results show that our models retain a clear advantage as the prediction step increases, demonstrating effective multi-step graph-state prediction.

\begin{table*}[!t]
    \centering
    \scriptsize
    \caption{Multi-step prediction results on the Graph-level tasks. Models recursively use their predicted graph states for steps $2$, $3$, and $5$. MAE and RMSE are lower-is-better, whereas Macro-F1 is higher-is-better. The best results are shown in \textbf{bold} and the second best results are \underline{underlined}.}
    \label{tab:multistep_graph}
    \resizebox{\textwidth}{!}{%
    \begin{tabular}{llrrrrrrrrr}
        \toprule
        \multirow{2}{*}{Dataset} & \multirow{2}{*}{Model} & \multicolumn{3}{c}{Step 2} & \multicolumn{3}{c}{Step 3} & \multicolumn{3}{c}{Step 5} \\
        \cmidrule(lr){3-5} \cmidrule(lr){6-8} \cmidrule(lr){9-11}
        & & MAE & RMSE & Macro-F1 & MAE & RMSE & Macro-F1 & MAE & RMSE & Macro-F1 \\
        \midrule
        \multirow{12}{*}{\textit{Flights}} & GCN & 0.9208 & 1.5297 & 0.4486 & 1.3097 & 2.0548 & 0.3970 & 2.2527 & 3.2212 & 0.3584 \\
        & GAT & 0.9080 & 1.4992 & 0.4314 & 1.1381 & 1.7816 & 0.3950 & 1.7601 & 2.5047 & 0.3372 \\
        & GraphSAGE & 0.8764 & 1.4712 & 0.4667 & 1.1325 & 1.8040 & 0.4418 & 1.6862 & 2.4935 & 0.3982 \\
        & SGFormer & 0.8789 & 1.4686 & 0.4735 & 1.1885 & 1.8848 & 0.4207 & 1.8938 & 2.7597 & 0.3861 \\
        & NodeFormer & 0.9694 & 1.5545 & 0.4004 & 1.3759 & 1.9958 & 0.3755 & 2.2803 & 2.9962 & 0.3449 \\
        & GraphGPS & 0.8799 & 1.4658 & 0.4834 & 1.2033 & 1.8941 & 0.4433 & 1.9271 & 2.8334 & 0.4045 \\
        & TGN & 0.9161 & 1.4559 & 0.4467 & 1.4293 & 2.0275 & 0.3755 & 3.0440 & 3.8768 & 0.3411 \\
        & TIDFormer & 0.8942 & 1.4159 & 0.4682 & 1.2958 & 1.9056 & 0.4236 & 2.5503 & 3.5035 & 0.3537 \\
        & Graph-JEPA & 0.9502 & 1.5329 & 0.4564 & 1.6205 & 2.3081 & 0.4047 & 4.1542 & 5.1787 & 0.3511 \\
        & MDGFM & 0.9480 & 1.5371 & 0.3762 & 1.3147 & 1.9468 & 0.3512 & 2.1888 & 2.9197 & 0.3284 \\
        \cmidrule(lr){2-11}
        & \textbf{WorldGraph} & \textbf{0.7248} & \underline{1.2904} & \underline{0.4889} & \underline{0.8976} & \underline{1.4458} & \underline{0.4694} & \underline{1.3700} & \underline{2.0019} & \underline{0.4376} \\
        & \textbf{WorldGraph-Pretrain} & \underline{0.7251} & \textbf{1.2879} & \textbf{0.5012} & \textbf{0.8660} & \textbf{1.4123} & \textbf{0.4823} & \textbf{1.2787} & \textbf{1.9103} & \textbf{0.4457} \\
        \midrule
        \multirow{12}{*}{\textit{Contact}} & GCN & 0.8891 & 1.1747 & 0.4889 & 1.3626 & 1.7485 & 0.4343 & 2.3580 & 2.8872 & 0.3735 \\
        & GAT & 0.8579 & 1.1269 & 0.4862 & 1.2642 & 1.6818 & 0.4285 & 1.9412 & 2.7057 & 0.4048 \\
        & GraphSAGE & 0.8653 & 1.1392 & 0.5057 & 1.2461 & 1.6131 & 0.4657 & 1.6471 & 2.1265 & 0.4286 \\
        & SGFormer & 0.8160 & 1.1041 & 0.4935 & 1.1923 & 1.5899 & 0.4422 & 1.6425 & 2.2004 & 0.4057 \\
        & NodeFormer & 1.0261 & 1.3081 & 0.4596 & 1.7076 & 2.1143 & 0.4068 & 2.8412 & 3.3949 & 0.3751 \\
        & GraphGPS & 0.8513 & 1.1307 & 0.4971 & 1.3215 & 1.6957 & 0.4505 & 2.0084 & 2.5280 & 0.4244 \\
        & TGN & 1.0661 & 1.3535 & 0.4750 & 1.8515 & 2.3242 & 0.4089 & 3.5733 & 4.3327 & 0.3650 \\
        & TIDFormer & 0.9457 & 1.2559 & 0.4828 & 1.6557 & 2.1509 & 0.4324 & 3.3123 & 4.2178 & 0.3936 \\
        & Graph-JEPA & 0.9747 & 1.2581 & 0.4884 & 1.6361 & 2.1350 & 0.4388 & 3.0525 & 4.0853 & 0.4112 \\
        & MDGFM & 0.9567 & 1.2312 & 0.4642 & 1.5444 & 1.9985 & 0.4085 & 2.8350 & 3.6336 & 0.3985 \\
        \cmidrule(lr){2-11}
        & \textbf{WorldGraph} & \underline{0.6756} & \underline{0.9754} & \underline{0.5412} & \underline{0.8290} & \underline{1.1253} & \underline{0.4984} & \underline{1.3388} & \underline{1.6578} & \underline{0.4919} \\
        & \textbf{WorldGraph-Pretrain} & \textbf{0.6540} & \textbf{0.9542} & \textbf{0.5693} & \textbf{0.7796} & \textbf{1.0800} & \textbf{0.5276} & \textbf{1.2564} & \textbf{1.5759} & \textbf{0.5046} \\
        \midrule
        \multirow{12}{*}{\textit{Enron}} & GCN & 1.3042 & 1.9075 & 0.3605 & 1.5877 & 2.1316 & 0.3126 & 2.2529 & 2.8909 & 0.2909 \\
        & GAT & 1.2605 & 1.9077 & 0.3580 & 1.3940 & 1.9723 & 0.3290 & 1.6845 & 2.2817 & 0.2990 \\
        & GraphSAGE & 1.2994 & 1.9115 & 0.3718 & 1.5150 & 2.0678 & 0.3369 & 1.9448 & 2.5415 & 0.3059 \\
        & SGFormer & 1.2876 & 1.8868 & 0.3794 & 1.5019 & 2.0407 & 0.3455 & 1.8557 & 2.4450 & 0.3211 \\
        & NodeFormer & 1.2769 & 1.9222 & 0.3948 & 1.4932 & 2.0900 & 0.3688 & 1.9648 & 2.6402 & 0.3424 \\
        & GraphGPS & 1.2499 & 1.8518 & 0.3937 & 1.3978 & 1.9342 & 0.3546 & 1.7089 & 2.2872 & 0.3152 \\
        & TGN & 1.2283 & 1.8633 & 0.4151 & 1.3613 & 1.9201 & 0.3968 & 1.6981 & 2.2845 & 0.3809 \\
        & TIDFormer & 1.2737 & 1.8897 & 0.3761 & 1.4000 & 1.9510 & 0.3558 & 1.7766 & 2.3845 & 0.3299 \\
        & Graph-JEPA & 1.3478 & 1.9117 & 0.3394 & 1.6037 & 2.1119 & 0.3145 & 2.2833 & 2.9209 & 0.2981 \\
        & MDGFM & 1.3762 & 2.0157 & 0.3419 & 1.6272 & 2.1908 & 0.3015 & 2.2030 & 2.8013 & 0.2610 \\
        \cmidrule(lr){2-11}
        & \textbf{WorldGraph} & \underline{1.2231} & \underline{1.7935} & \underline{0.4172} & \underline{1.2508} & \underline{1.7171} & \underline{0.4108} & \underline{1.3441} & \underline{1.7830} & \underline{0.3901} \\
        & \textbf{WorldGraph-Pretrain} & \textbf{1.2071} & \textbf{1.7874} & \textbf{0.4261} & \textbf{1.2332} & \textbf{1.7136} & \textbf{0.4113} & \textbf{1.3192} & \textbf{1.7820} & \textbf{0.3937} \\
        \bottomrule
    \end{tabular}}
\end{table*}
\begin{figure*}[!t]
    \centering
    \includegraphics[width=0.6\linewidth]{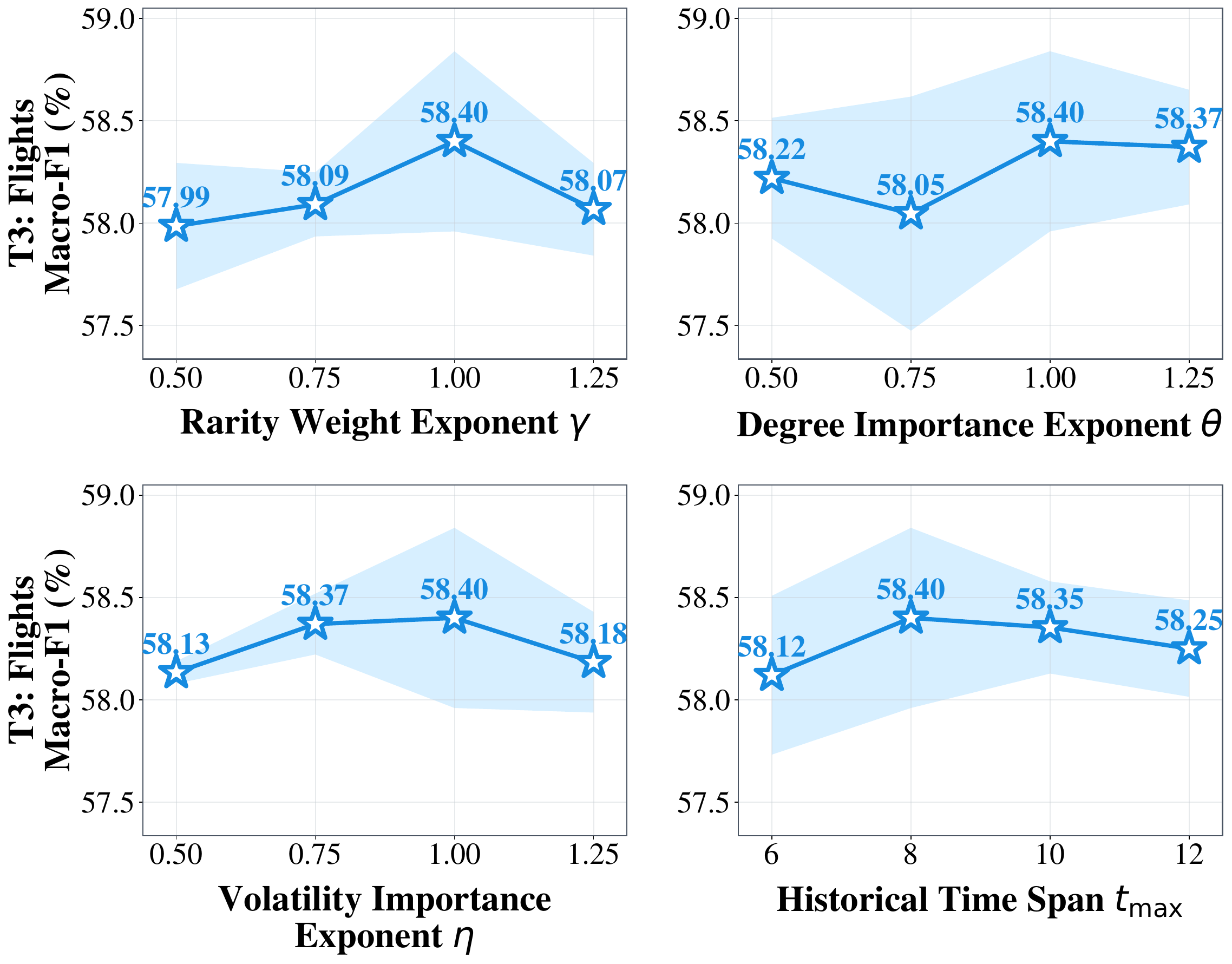}
    \caption{Sensitivity analysis on Graph-Level Tasks/Flights (Macro-F1) across the rarity weight exponent $\gamma$, degree importance exponent $\theta$, volatility importance exponent $\eta$, and historical time span $t_\mathrm{max}$.}
    \label{fig:rl_sensitivity}
\end{figure*}
\subsection{Additional Sensitivity Analysis}
\label{sec:ap_rl_sensitivity}
Figure~\ref{fig:rl_sensitivity} shows the sensitivity of WorldGraph to four additional hyperparameters on Graph-Level Tasks/Flights (Macro-F1): the rarity weight exponent $\gamma$, the degree importance exponent $\theta$, the volatility importance exponent $\eta$, and the historical time span $t_\mathrm{max}$ used by the history-aware state encoder. Overall, the four curves exhibit an inverted U-shaped trend, showing that moving either below or above the default configuration can degrade performance. The default values $\gamma=1.00$, $\theta=1.00$, $\eta=1.00$, and $t_\mathrm{max}=8$ reach the common peak of $58.40\%$; for example, the rarity exponent gives $57.99\%$ at $0.50$ and $58.07\%$ at $1.25$, while the historical span gives $58.13\%$, $58.35\%$, and $58.25\%$ at $t_\mathrm{max}=6$, $10$, and $12$, respectively. The intermediate settings of $\theta$ and $\eta$, together with the default history span, provide a balance between emphasizing important nodes, preserving structural coverage, and retaining useful temporal context.

\subsection{Image-to-Graph Visual Rollout.}
We further evaluate WorldGraph on a five-object Shapes transition-conditioned visual rollout task. Each input image is encoded by a CNN object extractor and an MLP into five object nodes forming a complete graph. Given the recorded environment transition, the multi-granularity graph encoder, history-aware state encoder, Controller, and transition module predict the next graph-state change; the Controller is trained with transition-aware dynamic group sampling and structure-aware rewards. A separately trained CNN decoder then maps the updated graph state back to an image. During multi-step prediction, only the initial image and the recorded transition sequence are provided, and the model recursively uses its predicted graph state rather than any ground-truth future image. As shown in Figure~\ref{fig:vision_worldgraph_rollout}, the decoded WorldGraph rollout closely matches the ground-truth sequence, with the object positions and their temporal changes remaining well aligned across the prediction steps. This result demonstrates that WorldGraph can also support computer-vision tasks through graph-state prediction and reconstruction.

\begin{figure*}[!h]
    \centering
    \includegraphics[width=\linewidth]{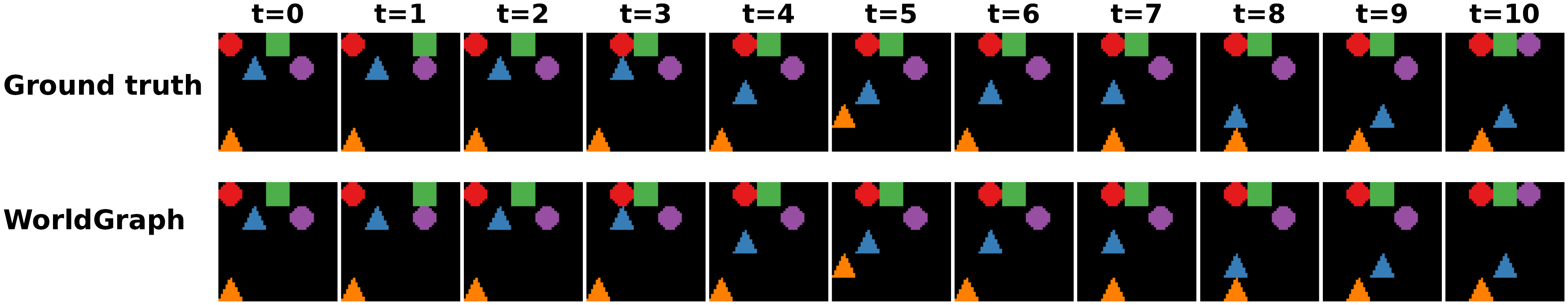}
    \caption{Transition-conditioned image-to-graph-to-image rollout. The first row shows the ground-truth sequence, and the second row shows the images decoded from the graph states predicted by WorldGraph.}
    \label{fig:vision_worldgraph_rollout}
\end{figure*}
\clearpage
\newpage
\section{More Related Works}
\subsection{World Models.}
World models learn compact internal representations of environments and model their temporal evolution to support prediction, planning, and decision-making. The seminal World Models framework~\citep{worldmodels} combines visual representation learning, recurrent memory, and a controller, while PlaNet~\citep{hafner2019learning} learns stochastic latent dynamics for online planning and Dreamer~\citep{HafnerLB020} learns long-horizon behaviors through latent imagination. Recent studies further improve the scalability and temporal abstraction of world models through decoder-free latent dynamics~\citep{hansen2024td}, Transformer-based contrastive prediction~\citep{burchi2025learning}, and autoregressive next-clip diffusion for visual prediction~\citep{zhuang2026video}. Object-centric world models further decompose observations into explicit entities and model their stochastic dynamics~\citep{daniel2026latent}. However, although these representations improve prediction or make individual entities explicit, they generally leave evolving relations and structural dependencies implicit. Consequently, they struggle to model entity, relation, and attribute changes at different granularities, highlighting graph-structured representations and relational inductive biases as important foundations for modeling complex and evolving worlds~\citep{liu2026graphworldmodelsconcepts}.

\subsection{Temporal Graph Learning.}
Temporal graph learning models~\citep{kumar2019predicting,rossi2020temporal,xu2020inductive,wang2021inductive,cong2023we,yu2023towards,tian2024freedyg,lu2024improving,zou2024repeat,peng2025tidformer,shi2026temporal} time-varying interactions and connectivity, typically by encoding timestamped events or graph snapshots into temporal node representations. 
For example, TGN~\citep{rossi2020temporal} maintains node memories from timed events, TIDFormer~\citep{peng2025tidformer} uses calendar-based temporal partitioning and interaction-level attention to capture temporal and interactive dynamics, and TGT~\citep{shi2026temporal} summarizes global evolutionary regularities with a temporal graph thumbnail. However, these methods primarily organize temporal information around interactions, sampled neighborhoods, or aggregate graph regularities and are mainly evaluated on node- or link-oriented prediction tasks. They are therefore not designed to explicitly represent coordinated changes to entities, relations, attributes, and local structures at multiple granularities. In contrast, WorldGraph models such structured graph changes as explicit transitions and jointly performs state modeling and node-, edge-, and graph-level transition prediction.

\section{Reproducibility and Code Availability}
To ensure the reproducibility of our results, we provide the source code of WorldGraph and the benchmark datasets (GWM-Zero) at \url{https://github.com/USTC-DataDarknessLab/Graph-Native_World_Modeling}.

\section{Limitations and  Broader Impacts}
\subsection{Limitations}
\label{sec:E1}
In this work, we present an initial conceptualization of a graph-native world model. However, further work is still needed to better integrate information from diverse modalities, so as to more fully realize the potential of graph world models.
\subsection{Broader Impacts}
\label{sec:E2}
This work proposes a novel graph-native world modeling paradigm, with the potential to address existing limitations of conventional world models in learning structural information, thereby enabling more faithful modeling of relational dynamics in evolving graph worlds.
Meanwhile, the core ideas of this work can be transferred to different modalities, enabling multimodal graph world modeling where diverse observations (e.g., images and text) are fused into an evolving graph representation.

\newpage

\end{document}